\begin{document}

\title{BGTplanner: Maximizing Training Accuracy for Differentially Private Federated Recommenders via Strategic Privacy Budget Allocation}

\author{Xianzhi Zhang,
        Yipeng Zhou,~\IEEEmembership{Member,~IEEE,}
        Miao Hu,~\IEEEmembership{Member,~IEEE,}
        Di Wu,~\IEEEmembership{Senior Member,~IEEE},
        Pengshan Liao,
        Mohsen Guizani,~\IEEEmembership{Fellow,~IEEE,}  
        Michael Sheng,~\IEEEmembership{Member,~IEEE}
\IEEEcompsocitemizethanks{
\IEEEcompsocthanksitem Xianzhi Zhang, Miao Hu, Di Wu, and Pengshan Liao are with the School of Computer Science and Engineering, Sun Yat-sen University, Guangzhou, China, and Guangdong Key Laboratory of Big Data Analysis and Processing, Guangzhou, China. (E-mail: \{zhangxzh9, liaopsh\}@mail2.sysu.edu.cn; \{humiao5, wudi27\}@mail.sysu.edu.cn). 
\IEEEcompsocthanksitem Yipeng Zhou and Michael Sheng are with the Department of Computing, Faculty of Science and Engineering, Macquarie University, Sydney, Australia. (E-mail: \{yipeng.zhou, michael.sheng\}@mq.edu.au,).
\IEEEcompsocthanksitem Mohsen Guizani is with the Machine Learning Department, Mohamed Bin Zayed University of Artificial Intelligence, Abu Dhabi, UAE (E-mail: mguizani@ieee.org).
}
}



\maketitle

\begin{abstract}
To mitigate the rising concern about privacy leakage, the federated recommender (FR) paradigm emerges, in which decentralized clients co-train the recommendation model without exposing their raw user-item rating data. The differentially private federated recommender (DPFR) further enhances FR by injecting differentially private (DP) noises into clients. 
Yet, current DPFRs, suffering from noise distortion, cannot achieve satisfactory accuracy. Various efforts have been dedicated to improving DPFRs by adaptively allocating the privacy budget over the learning process. However, due to the intricate relation between privacy budget allocation and model accuracy, existing works are still far from maximizing DPFR accuracy. 
To address this challenge, 
we develop BGTplanner (Budget Planner) to strategically allocate the privacy budget for each round of DPFR training, improving overall training performance.
Specifically, we leverage the Gaussian process regression and historical information to predict the change in recommendation accuracy with a certain allocated privacy budget.  
Additionally, Contextual Multi-Armed Bandit (CMAB) is harnessed to make privacy budget allocation decisions by reconciling the current improvement and long-term privacy constraints. 
Our extensive experimental results on real datasets demonstrate that \emph{BGTplanner} achieves an average improvement of 6.76\% in training performance compared to state-of-the-art baselines.
\end{abstract}

\begin{IEEEkeywords}
data-sharing platforms, differential privacy, budget allocation, online algorithm.
\end{IEEEkeywords}

\section{Introduction}
Nowadays, machine learning has been widely applied in various recommender systems, necessitating the processing of substantial user data. Consequently, there is increasing concern from users about the security and privacy of their data~\cite{li2023optimal,Hu2022}.
To protect private raw data, the federated recommender (FR) paradigm \cite{wu2021fedgnn,Guo2022,Wu2024} has emerged, offering a solution to the conundrum of balancing data-driven progress with the preservation of individual privacy. 
In this evolving landscape, organizations 
mainly exchange training parameters with distributed data sources for improving recommendations while respecting user privacy~\cite{Agrawal_Sirohi_Kumar_2024}. 

Nevertheless, revealing training parameters as opposed to raw data proves inadequate for ensuring privacy protection, invoking the development of differentially private federated recommenders (DPFRs)~\cite{Agrawal_Sirohi_Kumar_2024,Jiang2023}.
Studies indicate that training parameters, including gradient values, can facilitate the reconstruction of a segment of original data \cite{DBLP:conf/sp/MelisSCS19}, or enable inference about whether the records pertained by a specific participant \cite{DBLP:conf/kdd/LiuCWLCC23,ArroyoArevalo_Noorbakhsh_Dong_Hong_Wang_2024}. 
To ensure data privacy, the additional safeguard, \emph{i.e.}, differential privacy (DP) \cite{cormode2018privacy,Kean2024,He2024}, is involved by DPFRs, which inject noises to distort training parameters, 
and have emerged as a key consideration in privacy-preserving recommender systems. 

However, the practical application of DPFRs is impeded by substantially compromised recommendation accuracy due to the distortion caused by DP noise. 
The challenge is rooted in the complicated relationship between privacy budget consumption and learning improvement over multiple times of training parameter exposure in the DPFR learning process. 
On the one hand,  to guarantee privacy, the total privacy budget is constrained, and only a fixed small amount of privacy budget can be consumed by each exposure~\cite{Kang2020, Girgis2021}. Whereas, a smaller privacy budget implies a larger noise scale and more significant detriment to DPFR accuracy. On the other hand, the FR training process is highly dynamic with heterogeneous data scattered among users, resulting in a dynamic learning process~\cite{Zhang_Chen_Ma_Fang_King_2024}. Consequently, it is difficult to predict the improvement of each learning round in advance.

When facing multiple times of exposures, the naive approach evenly allocates a fixed amount of privacy budget to each training round regardless of discrepant learning improvements during the dynamic training process. Such an approach fails to maximize recommendation accuracy because the privacy budget is probably wasted if it is spent on protecting users bringing little learning improvement. 
Recently, a few works have attempted to tune the privacy budget allocation over multiple learning rounds adaptively. However, most of these works such as  ADPML~\cite{electronics12030658} are heuristics-based algorithms, far away from the optimal budget allocation.

\begin{table*}[ht]
\centering
\caption{The main notations and definitions in the paper.}
\label{Table:Major Notations}
\renewcommand{\arraystretch}{1.2}
\rowcolors{2}{white}{gray!25} 
\begin{tabular}{p{2.7cm}<{\centering} m{\linewidth-3.55cm}}
    \toprule 
        Notation  & \multicolumn{1}{c}{Definition}  \\
    \midrule
        $u\in \mathcal{U}$ & The client index and the set of all clients.\\
        $ t \ /\ \tau \ /\ T$ & The training round index, the total training rounds with early termination and the maximum training rounds.\\
        $\bm{\mathrm{X}}^{t}\ /\ \bm{x}^{t}$ & The original context matrix of training data information and the SVD-reduced vector in round $t$. \\
        $d$ & The hyper-parameter representing the dimension of the vector $\bm{x}^{t}$. \\
        $\bm{\epsilon}^{total}\ /\ \bm{\epsilon}^{t}$ & The total privacy budget vector and the privacy budget cost vector for all clients in training round $t$. \\
        $(\epsilon_u^{t}, \delta_u^t)$ & The privacy budget of client $u \in\mathcal{U}$ in round $t$.\\
        ${C}_u(\cdot)$ & The budget generating function of DP noise on client $u$. \\
        $\bm{\iota}_u^{t}$\ /\  $\bm{\nu}_u^{t}$ \ /\  $ \bm{\omega}^{t}_u$ & The parameters of the local item embedding, the local user embedding and recommender model for clients $u$ in round $t$.\\
        $\nabla \bm{\iota}_u^{t}$\ /\  $\nabla \bm{\nu}_u^{t}$ \ /\  $\nabla \bm{\omega}^{t}_u$ & The gradients of the local item embedding, the local user embedding and recommender model for clients $u$ in round $t$.\\
        $a^t\in\mathcal{A}$ & The budget allocation action in round $t$ and the action space.\\
        $A$     & The size of action space.\\
        $r^{t}$ \ /\ $\bm{r}^{t}$ & The reward of model training in $t$ round, 
         which can be defined as the change of training loss or recommender accuracy between round $t$ and $t-1$. \ /\ The vector of all observed rewards until round $t$.\\
        $\bm{z}^t$ & The concatenated input vector of action $a\in\mathcal{A}$ and context vector $\bm{x}^t$ for GPR model.\\
        $\hat{r}(\bm{z}^{t})$ or $\hat{r}(a,\bm{x}^t)$ & The predicted reward by GPR model with action $a\in\mathcal{A}$ and context vector $\bm{x}^t$.\\

        ${\mu}(\bm{z}^{t})$ \ /\ ${\sigma}(\bm{z}^{t})$ &  The mean function and variance function of the conditional distribution $\mathcal{N}({\mu}(\bm{z}^{t}), \sigma(\bm{z}^{t}))$ of GPR model to predict reward $\hat{r}(a,\bm{x}^t)$.\\
        $\bm{\lambda}^t$ & The vector of dual variables (Lagrange multipliers) for long-term budget constraint in round $t$.\\
        $\mathcal{V}$ \ /\ $\Lambda$ & The space of dual variables $\bm{\lambda}$ and the $l_1$-radius of the space $\mathcal{V}$.\\
        $T_0$ &  The total rounds for playing each action in the initial stage of BGTplanner.\\
        $\beta^t (a)$ & The CMAB score function for action $a$ in round $t$.\\
        $p^t (a)$ & The probability for CMAB to choose action $a$ in round $t$.\\
    \bottomrule
    \end{tabular}
    \vspace{-2mm}
\end{table*} 

To address the challenges in allocating privacy budgets in DPFR, we introduce a novel online  algorithm, called \emph{BGTplanner} (Budget Planner). Different from prior approaches, our method employs the Gaussian process regression (GPR)  to predict learning improvement on the fly based on historical information and dynamic contexts (e.g., dynamic information of training items). Next, Contextual Multi-Armed Bandit (CMAB) is leveraged to balance the tradeoff between the current learning improvement and long-term privacy budget constraints, which can avoid greedily depleting the privacy budget in a single round. 
Our contributions can be summarized as follows:
\begin{itemize}
     \item  We formulate an online privacy budget allocation problem for the DPFR system. 
     Then, we exploit historical information and  Gaussian process regression to predict the dynamic learning improvement under different contexts. 

    \item  We harness the Contextual Multi-Armed Bandit (CMAB) method to make privacy budget allocation decisions for reconciling the current learning improvement and long-term privacy budget constraints. 
    \item  We provide sufficient theoretical analyses for BGTplanner. Furthermore, through extensive experiments, we demonstrate the superiority of our method compared to state-of-the-art baselines, with an average improvement of 6.76\% in RMSE and 0.71\% in F1-score. 
\end{itemize}

{\color{black}
The remainder of this paper is organized as follows. In Section \ref{sec:related work}, we review the related works. Section \ref{sec:preliminaries} provides the necessary background on DPFR and CMAB. Our system model and problem formulation are discussed in Section \ref{sec:problemformulation}. Section \ref{sec:algorithm design} introduces the design of our algorithm, along with an analysis of differential privacy and regret. In Section \ref{sec:perfomance}, we present comprehensive performance evaluations. Finally, we conclude the paper in Section \ref{sec:conclusion}.
}

\section{Related Work}\label{sec:related work}

\subsection{Training Federated Recommender with DP}
Training recommender models often relies on sensitive data, such as users' historical interactions with items. To protect origin data from leaking to malicious attackers, FR, a distributed learning framework~\cite{mcmahan2017} has been widely adopted for training recommender models~\cite{wu2021fedgnn,Agrawal_Sirohi_Kumar_2024}. 
Nevertheless, revealing training parameters rather than raw data has proven inadequate for ensuring privacy protection~\cite{DBLP:conf/sp/MelisSCS19,
DBLP:conf/kdd/LiuCWLCC23, ArroyoArevalo_Noorbakhsh_Dong_Hong_Wang_2024}, leading to the development of DPFR. 
To safeguard data privacy, DPFRs incorporate 
DP mechanisms~\cite{cormode2018privacy}, which enhance privacy by injecting noises to distort exposed training parameters. This approach has emerged as a key consideration in privacy-preserving recommenders~\cite{chen2022differential}, despite that its adoption in practice is still hindered by lowered recommendation accuracy.

\subsection{Differential Privacy Budget Allocation}
The privacy budget is a non-recoverable resource, which must be split into multiple small portions to restrict the privacy loss per training round~\cite{Kang2020, Girgis2021}.
It has been reported in \cite{chen2022differential} that DP noises generated by a small privacy budget per round can severely impair model accuracy. 
Existing studies have initially attempted to address the influence of noise on DPFR. 
For example, \cite{DBLP:conf/fat/HongWZ22} explored dynamic techniques for adjusting the learning rate, batch size, noise magnitude, and gradient clipping to improve DPFR accuracy. Yet, this work does not account for dynamic factors in the recommender training process, such as online interactive ratings and randomly generated noises~\cite{wu2021fedgnn}. 
The dynamic strategy introduced by \cite{electronics12030658} focused on privacy budget allocation in a heuristic-based method. 
However, when applied to DPFR, it confronts the limitation that a heuristically designed strategy can be far from the optimal privacy budget allocation. 

CMBA is an online decision-making scheme applicable in complex dynamic environments~\cite{DBLP:journals/corr/abs-2303-10373}. 
\cite{pmlr-v130-shi21c} applied CMBA to improve personalized FL. \cite{DBLP:journals/iotj/WuZZCWC23, DBLP:journals/corr/abs-2303-10373} applied CMBA to 
improve client selection for FL framework 
by overcoming dynamic training environments. Because of the dynamic nature of user interactions with recommenders, we are among the first to utilize CMAB for optimizing the allocation of the privacy budget in DPFR.

\section{Preliminaries}\label{sec:preliminaries}

\subsection{Differentially Privacy and Federated Recommender}
{\color{black}
Differential privacy (DP)~\cite{Xia2024,Kean2024} is a mechanism designed to limit an attacker's ability to gain additional knowledge about individuals in a dataset. A typical approach to achieve DP is by introducing randomness into computations, which conceals the details of individual data records. Formally, DP is characterized by $\epsilon$, which defines the upper bound on the difference between two neighboring inputs, and $\delta$, which represents the residual probability. The formal definition of classic ($\epsilon, \delta$)-DP is provided in Definition~\ref{definition:DP}.

\begin{definition}[($\epsilon, \delta$)-DP]
For any $\epsilon > 0$ and $\delta \in [0, 1]$, a randomized algorithm $\mathcal{M}$: $\mathcal{D} \rightarrow \mathcal{O}$ satisfies ($\epsilon, \delta$)-DP if for any two neighboring inputs $D, D^{'} \in \mathcal{D}$ and any subset of output $O \subset \mathcal{O}$, the following holds:
\begin{equation}
\mathbf{Pr}[\mathcal{M}(D) \in O] \leq e^{\epsilon} \cdot \mathbf{Pr}[\mathcal{M}(D^{'}) \in O] + \delta.
\end{equation}
\label{definition:DP}
\end{definition}
In the DP mechanism, revisiting the same dataset increases the risk of privacy leakage compared to a single access~\cite{Xiao2024}. The composition theorem~\cite{dwork2014algorithmic} states that the privacy budget per query should decrease as the number of accesses to the dataset grows. The most fundamental form of this is the sequential composition theorem, presented in Theorem~\ref{theorem:Sequential_Composition}.

\begin{theorem}[Sequential Composition~\cite{dwork2014algorithmic}]
Let the randomized algorithm $\mathcal{M}^{t}$: $\mathcal{D} \rightarrow \mathcal{O}$ satisfy ($\epsilon^{t}, \delta^{t}$)-DP for all $t \leq T, t\in \mathbb{N}^{+}$. Then, the composition $\mathcal{M}$ of all algorithms for the dataset $D \in \mathcal{D}$, defined by $\mathcal{M}(D) = (\mathcal{M}_{1}(D), \dots, \mathcal{M}_{T}(D))$, satisfies ($\sum_{t=1}^{T} \epsilon^{t}, \sum_{t=1}^{T} \delta^{t}$)-DP.
\label{theorem:Sequential_Composition}
\end{theorem}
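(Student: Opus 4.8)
The plan is to prove Theorem~\ref{theorem:Sequential_Composition} by induction on the number $T$ of composed mechanisms, after first isolating the core fact that pairing two DP mechanisms simply adds their parameters. The base case $T=1$ is exactly the hypothesis. For the inductive step, assume the partial composition $\mathcal{M}^{1:T-1}(D) := (\mathcal{M}^{1}(D), \dots, \mathcal{M}^{T-1}(D))$ is $\left(\sum_{t=1}^{T-1}\epsilon^{t}, \sum_{t=1}^{T-1}\delta^{t}\right)$-DP. It then suffices to establish a two-mechanism lemma: if $\mathcal{M}_{A}$ is $(\epsilon_{A},\delta_{A})$-DP and $\mathcal{M}_{B}$ is $(\epsilon_{B},\delta_{B})$-DP and the two draw independent internal randomness, then $D \mapsto (\mathcal{M}_{A}(D), \mathcal{M}_{B}(D))$ is $(\epsilon_{A}+\epsilon_{B}, \delta_{A}+\delta_{B})$-DP. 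Instantiating this with $\mathcal{M}_{A}=\mathcal{M}^{1:T-1}$ and $\mathcal{M}_{B}=\mathcal{M}^{T}$ closes the induction and yields the stated $\left(\sum_{t=1}^{T}\epsilon^{t}, \sum_{t=1}^{T}\delta^{t}\right)$-DP bound.

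To prove the two-mechanism lemma, I would fix neighboring inputs $D, D'$ and an output set $S \subseteq \mathcal{O}_{A}\times\mathcal{O}_{B}$, and exploit that the joint law factorizes over the independent randomness: writing $S_{o} = \{\, o_{B} : (o, o_{B}) \in S \,\}$ for the slice at $o\in\mathcal{O}_{A}$, we get $\Pr[(\mathcal{M}_{A}(D),\mathcal{M}_{B}(D))\in S] = \sum_{o\in\mathcal{O}_{A}} \Pr[\mathcal{M}_{A}(D)=o]\,\Pr[\mathcal{M}_{B}(D)\in S_{o}]$ (a sum in the discrete case, an integral in general). In the pure case ($\delta_{A}=\delta_{B}=0$) one substitutes the pointwise ratio bound $\Pr[\mathcal{M}_{B}(D)\in S_{o}] \le e^{\epsilon_{B}}\Pr[\mathcal{M}_{B}(D')\in S_{o}]$ inside the sum, then the pointwise bound $\Pr[\mathcal{M}_{A}(D)=o] \le e^{\epsilon_{A}}\Pr[\mathcal{M}_{A}(D')=o]$ on the outer factor, recombining to $e^{\epsilon_{A}+\epsilon_{B}}\,\Pr[(\mathcal{M}_{A}(D'),\mathcal{M}_{B}(D'))\in S]$; iterating this along the induction gives the factor $e^{\sum_{t}\epsilon^{t}}$.

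The main obstacle is handling the additive $\delta$ terms, since an inequality carrying additive slack cannot be "multiplied through" naively. To make this rigorous I would switch to the privacy-loss / approximate-max-divergence formulation of $(\epsilon,\delta)$-DP: for each neighboring pair there is a "failure" region of probability at most $\delta$ outside of which the pointwise privacy loss $\log\frac{\Pr[\mathcal{M}(D)=o]}{\Pr[\mathcal{M}(D')=o]}$ is at most $\epsilon$ (equivalently, $\Pr[\mathcal{M}(D)=\cdot]$ decomposes as a mixture of an $e^{\epsilon}$-pointwise-dominated component and a residual of mass $\le\delta$). Applying this decomposition separately to $\mathcal{M}_{A}$ and $\mathcal{M}_{B}$ and taking a union bound confines all the excess probability to a set of measure at most $\delta_{A}+\delta_{B}$, while on its complement the two privacy losses add to at most $\epsilon_{A}+\epsilon_{B}$; this is exactly the two-mechanism lemma, and summing the per-round $\delta^{t}$ along the induction produces $\sum_{t=1}^{T}\delta^{t}$. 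Finally, I would remark that the same conditioning argument also covers the adaptive variant in which $\mathcal{M}^{t}$ is allowed to depend on the earlier outputs $\mathcal{M}^{1}(D),\dots,\mathcal{M}^{t-1}(D)$, because the $(\epsilon^{t},\delta^{t})$-guarantee of $\mathcal{M}^{t}$ holds conditionally on every fixed prefix, so the factorization and the mixture decomposition go through verbatim.
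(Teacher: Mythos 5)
The paper does not prove this theorem at all; it is imported verbatim by citation from Dwork and Roth~\cite{dwork2014algorithmic}, so there is no in-paper argument to compare against. Your proposal is a correct reconstruction of the standard proof in that reference: induction on $T$ reducing to a two-mechanism basic-composition lemma, with the pure case handled by multiplying the pointwise likelihood-ratio bounds under the independence factorization. One caution on the $\delta$-handling: the phrase ``there is a failure region of probability at most $\delta$ outside of which the pointwise privacy loss is at most $\epsilon$'' describes \emph{probabilistic} DP, which is strictly stronger than $(\epsilon,\delta)$-DP, and the ``equivalently'' you assert there is false in general --- an $(\epsilon,\delta)$-DP mechanism need not admit such a failure event. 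The mixture-decomposition form you state in the same parenthetical (the output law of $\mathcal{M}(D)$ differs in total variation by at most $\delta$ from a distribution that is pointwise $e^{\epsilon}$-dominated, cf.\ Lemma~3.17 of Dwork--Roth) is the correct and sufficient tool, and with that substitution your union bound and the additive accumulation of the $\delta^{t}$ go through, as does your remark on the adaptive variant. So the argument is sound modulo that one mislabelled equivalence.
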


Following this, in federated learning~\cite{mcmahan2017,Zhou2024}, clients can securely update the recommender models using private datasets and upload the noisy gradients to the server for global aggregation in each training round. Specifically, a client $u$ in the differentially private federated recommender (DPFR) framework updates the gradients for the local item embedding $\nabla \bm{\iota}_u^{t}$, the local user embedding $\nabla \bm{\nu}_u^{t}$, and other recommender parameters $\nabla \bm{\omega}^{t}_u$, respectively~\cite{wu2021fedgnn,qi2020fedrec}. Before uploading these local recommender models to the server, the client distorts the local gradients by adding noise: $\nabla\tilde{\bm{\omega}}^t_u = \nabla \bm{\omega}^{t}_u + \bm{n}_\omega$ and $\nabla \tilde{\bm{\iota}}_u^{t} = \nabla \bm{\iota}_u^{t} + \bm{n}_\iota$, where $\bm{n}_u = [\bm{n}_\omega, \bm{n}_\iota]$ represents the DP noise generated by a specific DP noise mechanism (e.g., Gaussian~\cite{Kang2020} or Laplace~\cite{Girgis2021}) based on the allocated privacy budget $(\epsilon_u^t, \delta_u^t)$. \footnote{The user embedding is generally stored locally after being updated, and therefore does not require noise addition.}

\subsection{Contextual Multi-Arm Bandit Algorithm}
The foundational concept of the Contextual Multi-Armed Bandit (CMAB) algorithm is essential for addressing decision-making challenges that involve balancing exploration and exploitation. Analogous to rows of slot machines (one-armed bandits), the CMAB algorithm deals with selecting actions over successive rounds to maximize cumulative rewards. Let $A$ denote the number of arms (or actions), and $t$ represent discrete time steps. At each time step $t \in \{1,\dots, T\}$, the algorithm selects an action $a^{t}$ from the space of action $\mathcal{A}=\{1,2,\dots,A\}$ based on observed environmental information (e.g., any context vector matrix $\mathbf{X}^t$). The reward $r^{t}$ corresponding to the chosen action is then observed. The goal of the algorithm is to iteratively refine its action-selection strategy to maximize the expected cumulative reward over time, which can be expressed as:
\begin{equation}
\max_{\bm{a}} \sum_{t=1}^T \mathbb{E}\left[r^{t} \mid a^{t},\mathbf{X}^{t}\right].
\label{eq: max cumulative reward}
\end{equation}

}




\section{SYSTEM MODEL}\label{sec:problemformulation}
This section encompasses preliminary knowledge of DPFR, problem formulation, and BGTplanner overview.
To facilitate the readability, we have compiled a summary of notations used in this paper in Table~\ref{Table:Major Notations}.



\subsection{System Overview}
In Fig.~\ref{fig:system}, we present the workflow of a typical DPFR system. There are two main components: the server and clients. The server plays two functions: aggregating model parameters of the recommender and allocating the privacy budget to each client to determine the noise 
scale~\cite{DBLP:conf/sp/NasrSH19, DBLP:conf/sp/ShejwalkarHKR22}. 
Clients are responsible for conducting local model updates with private datasets. Before uploading local recommender models to the server, a DP 
noise 
adder will generate noises according to the allocated privacy budget to distort 
local models~\cite{wu2021fedgnn, chen2022differential,Agrawal_Sirohi_Kumar_2024}.

In the system, there are $U$ clients, denoted by the set $\mathcal{U} = \{1,\cdots,U\}$. The DPFR can conduct the maximum $T$ rounds of model training.
The total privacy loss of each client is constrained by a fixed total privacy budget $(\epsilon_u^{total}, \delta_u^{total}), \forall u\in\mathcal{U}$ to be consumed throughout the entire training. 
During the training process, clients and the server need to exchange information 
multiple times via the following five steps. 

\begin{figure}[t]      
\centering      
\includegraphics[width=\linewidth]{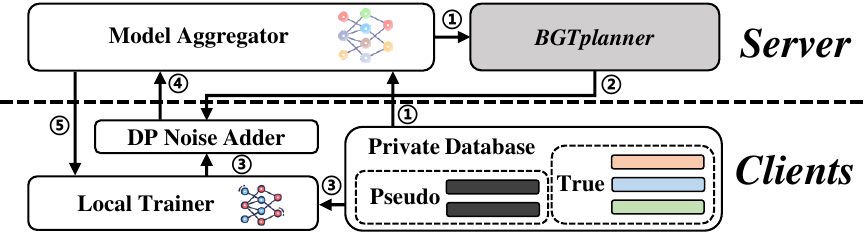}   
\caption{An overview of a typical DPFR system.}
\label{fig:system}  
\vspace{-4mm}
\end{figure}

In \textbf{Step \textcircled{1}},  all clients upload ID numbers of item samples used for 
the 
local training. The server can reform 
the ID information as the context matrix denoted by $\bm{\mathrm{X}}^{t} = [\mathrm{X}_{u,i}^t]^{U\times I}$. 
Here, $I$ represents the total number of items, and $\mathrm{X}_{u,i}^t\in\{0,1\}$.  $\mathrm{X}_{u,i}^t=1$ indicates that item $i$ is used for training on client $u$ in this round, while 
$\mathrm{X}_{u,i}^t=0$ signifies that the item is not used.
Note that $\bm{\mathrm{X}}^{t}$ is commonly uploaded together with updated parameters in 
Step 
\textcircled{4} for aggregating item embeddings in related works~\cite{wu2021fedgnn}. 
It 
makes no difference 
in 
uploading 
$\bm{\mathrm{X}}^{t}$ in either 
step\footnote{It is worth mentioning that $\bm{\mathrm{X}}^{t}$ before uploading can be obfuscated by clients with pseudo records to enhance privacy. Since our focus is not on how to obfuscate  $\bm{\mathrm{X}}^{t}$, we simply assume that existing obfuscation methods ~\cite{wu2021fedgnn} will be applied.}. 

In \textbf{Step \textcircled{2}}, the server allocates $(\epsilon_u^{t}, \delta_u^t)$ privacy budget to client $u \in\mathcal{U}$ in round $t$. 
If a client exhausts its privacy budget, it quits the training process. 
Besides, the server can verify the noise scale~\cite{DBLP:conf/sp/NasrSH19,DBLP:conf/sp/ShejwalkarHKR22} and account cumulative privacy budget consumption using privacy accounting methods for different DP mechanisms, such as native composition for Laplace noises~\cite{Kang2020} and Moments Accountant for Gaussian noises~\cite{DBLP:conf/ccs/AbadiCGMMT016}.

In \textbf{Step \textcircled{3}} and \textbf{\textcircled{4}}, client $u$ updates the recommender model via  \emph{Local Trainer} using the latest user-item interactions in local data. 
Local updating yields  gradients of the local item embedding $\nabla \bm{\iota}_u^{t}$, the local user embedding $\nabla \bm{\nu}_u^{t}$ and other recommender parameters $\nabla \bm{\omega}^{t}_u$, respectively~\cite{wu2021fedgnn}. 
After local training, client $u$ uploads $ \nabla\tilde{\bm{\omega}}^t_u = \nabla \bm{\omega}^{t}_u+\bm{n}_\omega $ and $\nabla \tilde{\bm{\iota}}_u^{t} = \nabla \bm{\iota}_u^{t} + \bm{n}_\iota$ where  $\bm{n}_u = [\bm{n}_\omega,\bm{n}_\iota]$ represents DP noises generated by \emph{DP Noise Adder} according to the allocated privacy budget $(\epsilon_u^t, \delta_u^t)$. 

In \textbf{Step \textcircled{5}}, \emph{Model Aggregator} deployed at the server aggregates recommender parameters and item embedding vectors based on $\bm{\mathrm{X}}^{t}$. For more details, please refer to~\cite{wu2021fedgnn}.
Then, the server broadcasts aggregated parameters and item embedding vectors to all clients.

\subsection{Online Privacy Budget Allocation Problem}
Our study primarily focuses on designing a more advanced BGTplanner.
In our design, we suppose that the server can select an action $a^{t} \in \mathcal{A}$ for privacy budget allocation, where $\mathcal{A} = \{1, 2, \ldots, A\}$ represents $A$ different levels of privacy budgets. 
The action $a^{t}$ corresponds to the privacy budget $(\epsilon_u^{t}, \delta_u^t)$, denoted by $(\epsilon_u^{t}, \delta_u^t) = C_u(a^{t})$,  for generating DP noises on client $u$. 
Our objective is to maximize the final recommendation accuracy, gauged by reward $\sum_{t=1}^Tr^{t}$. Here, $r^{t}$ denotes the reward of round $t$, which can be defined as the change of training loss or recommender accuracy between round $t$ and $t-1$. The FL training performance is highly dependent on data quality. Thus,  $r^{t}$ should be related to the privacy budget allocated to round $t$ and training records contributed by clients. According to the process in Fig.~\ref{fig:system}, training records are represented by $\bm{\mathrm{X}}^{t}$. Thus, we define the objective as
$ \mathbb{E}\left[r^{t} \mid a^{t},\bm{\mathrm{X}}^{t}\right]$.
Given the limited total privacy budget, our problem can be formulated as follows:
\begin{subequations}
\begin{align}
\label{con:problem_reg}
\mathbb{P}1: \max_{\bm{a},\tau}&\; \sum_{t=1}^\tau \mathbb{E}\left[r^{t} \mid a^{t}, \bm{\mathrm{X}}^{t}\right], \\
 \text{s.t.} \quad &\sum_{t=1}^{\tau} (\epsilon_u^{t}, \delta_u^t) \leq (\epsilon_u^{total}, \delta_u^{total}), \ \forall u\in\mathcal{U}, \label{EQ:c1}\\
 &(\epsilon_u^{t}, \delta_u^t) = C_u(a^{t}),\ \forall u\in\mathcal{U},\forall t,\label{EQ:c2}\\
&a^{t} \in \mathcal{A}, \ \forall t,
 \ \tau \leq T \label{EQ:c3}. 
\end{align}
\end{subequations}
There are two challenges for solving $\mathbb{P}1$. First, we need to specify the expression of 
$\mathbb{E}\left[r^{t} \mid a^{t}, \bm{\mathrm{X}}^{t}\right]$. To our best knowledge, the exact expression is unavailable in existing works. Second,  it cannot be maximized by a greedy algorithm. If we only consider $\mathbb{E}\left[r^{t} \mid a^{t}, \bm{\mathrm{X}}^{t}\right]$ for round $t$, we should allocate all budget to $t$ such that  $\mathbb{E}\left[r^{t} \mid a^{t}, \bm{\mathrm{X}}^{t}\right]$ is maximized, which unfortunately depletes the privacy budget for other training rounds, resulting in poor final recommendation accuracy. 

\section{BGTplanner Design}\label{sec:algorithm design}

Our solution for  $\mathbb{P}1$ is depicted in Fig.~\ref{fig:BGTplanner}. Briefly speaking, Predictor employs the Gaussian process regression (GPR) to predict reward $\mathbb{E}\left[r^{t} \mid a^{t}, \bm{\mathrm{X}}^{t}\right]$ based on historical information. To avoid the trap of greedy budget allocation,  Constrainer properly penalizes each round allocation.  Allocator makes final budget allocation decisions by considering both the current improvement reward and penalized budget cost.  

\begin{figure}[t]      
\centering      
\includegraphics[width=\linewidth]{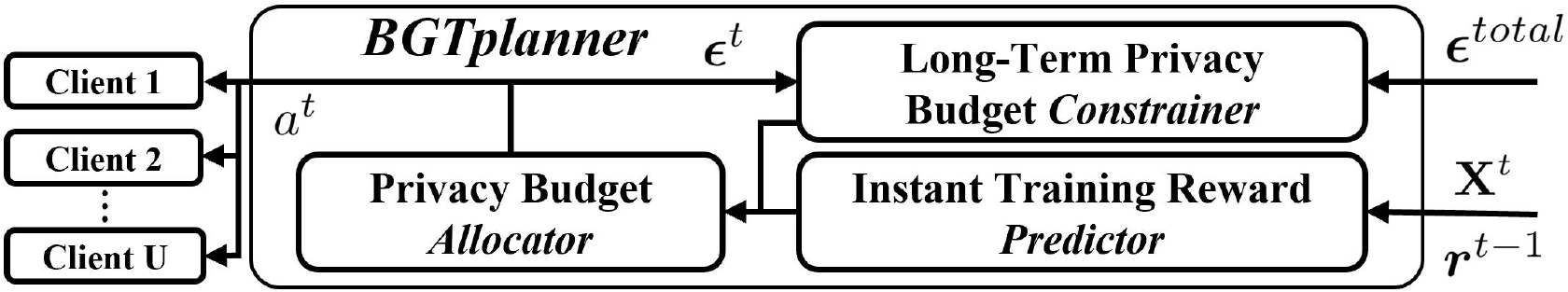}   
\caption{The workflow of BGTplanner.}
\label{fig:BGTplanner}  
\end{figure}

\subsection{Instant Training Reward Predictor Design}\label{sec:Reward Prediction}

In our problem,  accurate prediction of rewards serves as a cornerstone for BGTplanner. 
Given an action $a$, we employ GPR to model the relationship between the context input, \emph{i.e.}, $ \bm{\mathrm{X}}^{t}$, and the corresponding reward.  

In a typical recommender, $\bm{\mathrm{X}}^{t}$ is a 
high-dimensional matrix, making it impossible to directly apply GPR. To reduce the dimension of $ \bm{\mathrm{X}}^{t}$, we conduct the Singular Value Decomposition (SVD) operation to obtain $ \bm{\mathrm{X}}^{t} = \bm{\mathrm{U}}\Sigma \bm{\mathrm{V}}^*$. Then, we rank diagonal elements in $\Sigma$ by a descending 
order 
to only reserve top $d$ elements, where $d$ is a 
tuneable 
hyper-parameter. Let $\bm{x}^{t}$ denote the vector containing reserved $d$ elements. Unless otherwise specified, we use $\bm{x}^{t}$ to denote the context vector hereafter.

With a much lower dimension, GPR can predict $ \mathbb{E}\left[r^{t} \mid a, \bm{x}^{t}\right], \forall a\in\mathcal{A}$ based on historical information. 
To distinguish with $ \mathbb{E}\left[r^{t} \mid a, \bm{x}^{t}\right]$, let $\hat{r}(a, \bm{x}^{t})$  denote the predicted reward. 
According to~\cite{DBLP:conf/ijcai/YangZHWS23}, $\hat{r}(a, \bm{x}^{t})$ can be modelled as a random variable sampling value from the distribution of the Gaussian Process $\text{GP}\left(\mu_0(\bm{z}^{t}), k(\bm{z}^{t},\bm{z}')\right)$, where $\mu_0(\bm{z}^{t})$ denotes the mean function\footnote{For simplicity and practicability, $\mu_0(\bm{z})$ is assumed to be 
zero~\cite{DBLP:conf/ijcai/YangZHWS23}.} and $k(\bm{z}^{t},\bm{z}')$ is the covariance (or kernel) function for input vector $\bm{z}^{t}$ and the historical input vector $\bm{z}'$. 
To simplify our notation, we can concatenate any action $a \in \mathcal{A}$ (or a historical action $a^{t'}\in \mathcal{A}, \forall t' < t$) with the context $\bm{x}^{t}$ (or $\bm{x}^{t'}$) to form the vector $\bm{z}^{t} = [a, \bm{x}^{t}]$ (or $\bm{z}' = [a^{t'}, \bm{x}^{t'}]$). 
\footnote{
Note that $a^{t'}$ is already fixed in round $t'$ for $t'<t$.}

To adapt the GPR to model the temporal dynamics of observations (\emph{i.e.}, inputs $\bm{z}$), we construct a composite kernel that incorporates both the squared exponential kernel function and the Ornstein-Uhlenbeck temporal covariance function~\cite{DBLP:conf/icml/SrinivasKKS10}, yielding
\begin{equation}
k(\bm{z}^{t},\bm{z}')=(1-\alpha)^{\frac{|t-t'|}{2}} \exp  ( -\frac{||\bm{z}^{t}-\bm{z}'||_{2}}{2\,s^{2}} ), \forall \bm{z}'\in \mathcal{Z}^{t-1},
\end{equation} 
where $\mathcal{Z}^{t-1}=\{\bm{z}^{1},...,\bm{z}^{t-1}\}$ is the set of historical inputs until the round $t-1$, $|t - t'|$ represents the temporal difference between $\bm{z}^{t}$ and historical input $\bm{z}'$, and $\alpha$ is a hyper-parameter tuning the weight assigned to temporal correlations. 
The hyper-parameter $s > 0$ represents the length scale for determining the influence of one input on another. 
Hence, a larger $k(\bm{z}^{t}, \bm{z}')$ indicates a stronger correlation between $\bm{z}^{t}$ and $\bm{z}'$, implying the effectiveness of the historical observation with input $\bm{z}'$ in predicting the reward with input $\bm{z}^{t}$.  

Until  round $t$, we can collect the input set $\mathcal{Z}^{t-1}$ and the reward vector $\bm{r}^{t-1} =[r^{1},...,r^{t-1}]^{\top}$ including $t-1$ observed rewards. Thus, we can establish a GPR with $t-1$ variables to predict the $r^{t}$ distribution. The detailed joint distribution of GPR is 
presented 
in Appendix~B. 
Using the joint distribution of GPR and Bayes' theorem, we can predict $\hat{r}(a, \bm{x}^{t})$ for any action $a \in \mathcal{A}$ with context $\bm{x}^t$ by the conditional distribution $\mathcal{N}({\mu}(\bm{z}^{t}), \sigma(\bm{z}^{t}))$ parameterized by the mean function ${\mu}(\bm{z}^{t})$ in Eq.~\eqref{eq:cal_mean} and variance function $\sigma(\bm{z}^{t})$,\footnote{Due to the limited space, the expression of variance function $\sigma(\bm{z}^{t})$ 
is presented in Appendix~B.
} where $\bm{z}^{t} = [a, \bm{x}^{t}]$.
\begin{equation}
    \label{eq:cal_mean}
    {\mu}(\bm{z}^{t}) = {\mathbf{k}^{t-1}(\bm{z}^{t})}{[{\mathbf{K}}^{t-1}+\sigma_\mu^{2}\mathbf{I}]}^{-1}\bm{r}^{t-1}, \forall a\in\mathcal{A}.
\end{equation}
Here, $\mathbf{k}^{t-1}(\bm{z}^{t})=[k(\bm{z}^{t},\bm{z}^{1}),k(\bm{z}^{t},\bm{z}^{2}),\dots,k(\bm{z}^{t},\bm{z}^{t-1})]$ is the kernel vector between historical inputs and the current input, 
$\mathbf{K}^{t-1} = [k(\bm{z},\bm{z}')]_{\forall \bm{z},\bm{z}' \in \mathcal{Z}^{t-1}}$ represents the kernel matrix among historical inputs,
and $\mathbf{I}$ denotes an identity matrix.
Additionally, $\sigma_\mu$ is a variance parameter of distribution $\mathcal{N}_{\mu}(0, \sigma_\mu^{2})$, representing the deviation of the zero-mean random noise between the observed reward $r$ and the estimated reward $\hat{r}(a, \bm{x}^{t})$~\cite{DBLP:books/lib/RasmussenW06}. 

For a given $a$, it can be concatenated with the context $\bm{x}^{t}$ to form the input $\bm{z}^{t}$.
By leveraging Eq.~\eqref{eq:cal_mean}, we can estimate the expected reward $\hat{r}(a, \bm{x}^{t})$.
By enumerating $a\in\mathcal{A}$, we can predict the reward for choosing different actions. 
Note that our Predictor design is not based on a fixed $\bm{x}^t$. It can 
process dynamic contexts for predicting rewards with the evolving dynamics of user-item interactions over time. 





\subsection{Long-Term Privacy Budget Constrainer Design}\label{sec:Online Dual Variables Update}

We next discuss the design of the long-term privacy budget constrainer in BGTplanner. 

In $\mathbb{P}1$, the privacy budget is constrained in the entire training process by \eqref{EQ:c1}. To solve it, we decouple the time-dependent privacy budget constraint with an online queue optimization method~\cite{ouyang2023dynamic,DBLP:conf/aistats/HanZWXZ23} so that we can consider budget allocation round by round in $\mathbb{P}1$. 
For training round $\tau$, considering that the inequality $\sum_{t=1}^{\tau} \epsilon_u^{t} \leq \sum_{t=1}^{T} \epsilon_u^{t}$ always holds when $\tau \leq T$, we can relax the constraint in Eq.~\eqref{EQ:c1} as $\sum_{t=1}^{T} \epsilon_u^{t}\leq\epsilon_u^{total},\forall u$.
$\delta_u^t$ can be constrained in a similar way. For saving space, we simply use  $\epsilon_u^{t}$ to denote the privacy budget hereafter. 

The problem enforcing  long-term privacy budget constraints can be transformed into queue recursion optimization with Lagrange multipliers $\bm{\lambda}^{t}={ [\lambda_u^{t}]^U}^\top,\forall t.$ 
Therefore, the Lagrange function of reward in Eq.~\eqref{con:problem_reg} can be written as:
$L(\bm{a}, \bm{\lambda}) 
= \sum_{t=1}^T L^{t}(a^{t} ,\bm{\lambda}^{t}), 
$
where
\begin{equation}
\begin{split}
      &L^{t}(\,a^{t},\bm{\lambda}^{t})=
       - \mathbb{E}\left[r^{t} \mid a^{t},\bm{x}^{t}\right] +  \langle\frac{\bm{\epsilon}^{{total}}}{T} -\bm{\epsilon}^{t},\bm{\lambda}^{t}\rangle.
\end{split}
\end{equation}
Here, $\bm{\epsilon}^{total}=[\epsilon^{total}_u]^{U}$ is the total privacy budget vector,  $\bm{\epsilon}^{t}=[\epsilon^{t}_u]^{U}$  is the per-round privacy consumption vector for all clients  and $ \epsilon^{t}_u = C_u(a^t)$  is the function mapping actions and privacy budgets. 
Additionally, $\langle\cdot,\cdot\rangle$ represents the inner product of two vectors.
The problem $\mathbb{P}1$ is converted to maximizing the reward with the constraint punishment by solving 
$
\min_{\bm{a},\tau}
\max_{\bm{\lambda}}L(\bm{a}, \bm{\lambda})\; \text{s.t.}\; \eqref{EQ:c2},\eqref{EQ:c3}.
$ 
By penalizing the privacy budget consumption per training round, we can simplify the problem by only considering how to optimize $L^{t}(\,a^{t},\bm{\lambda}^{t})$ for a particular round $t$. 

By resorting to  the online mirror descent (OMD) algorithm~\cite{DBLP:journals/ftopt/Hazan16,DBLP:journals/ftml/Shalev-Shwartz12}, we can tackle the Lagrangian dual problem in every round $t$, formulated as $\min_{\bm{\lambda}^t\in\mathcal{V}} D^{t}(\bm{\lambda}^{t})$, where 
\begin{equation}
\label{con:dual_problem}
D^{t}(\bm{\lambda}^{t}) = \max_{a^{t}}   \left(\mathbb{E}\left[r^{t} \mid a^{t},\bm{x}^{t}\right] - \langle\frac{\bm{\epsilon}^{{total}}}{T} -\bm{\epsilon}^{t},\bm{\lambda}^{t}\rangle\right).
\end{equation}
Here, $\mathcal{V} = \left\{\boldsymbol{\lambda} \in \mathbb{R}^N : \boldsymbol{\lambda} \geq 0, \,\|\boldsymbol{\lambda}\|_1 \leq \Lambda\right\}$ is the space of the dual variable $\boldsymbol{\lambda}$, and $\Lambda > 0$ is the $l_1$-radius of $\mathcal{V}$. 

With the determined space set $\mathcal{V}$, Constrainer can update $\bm{\lambda}^{t+1}$ in the  rest training rounds with the following  rule:
\begin{equation}
    \label{eq:lambdaupdate}
     \bm{\lambda}^{t+1}=\arg \min _{\bm{\lambda} \in \mathcal{V}}\left(\langle\nabla D^{t}\left(\bm{\lambda}^{t}\right), \bm{\lambda}\rangle+\frac{B\left(\bm{\lambda}, \bm{\lambda}^{t}\right)}{\eta_t} \right), 
\end{equation}
where $B(\cdot,\cdot)$ is the Bregman divergence of the generating function and $\eta_t$ is the step factor of the OMD algorithm. Here, we select the generating function as the negative entropy function, the same as that in~\cite{DBLP:conf/aistats/HanZWXZ23}.

To sum up,  Constrainer enables the integration of long-term privacy budget constraints into instant reward maximization and focuses on dynamically tuning $\bm{\lambda}^t$ to penalize the privacy budget consumption properly. 

\subsection{Privacy Budget Allocator Design}\label{Pirvacy Budget Allocation}
The last component in BGTplanner is designed by leveraging contextual Multi-Armed Bandit (CMAB) to make final budget allocation decisions. 
Briefly speaking, there are two phases in BGTplanner, \emph{i.e.}, the \textit{initial} stage and the \textit{exploration-exploitation} stage, which are described as below.

\subsubsection{\textbf{Initial Stage}} The target of the \textit{initial} stage is to  establish the space $\mathcal{V}$ of $\bm{\lambda}^t$ for Constrainer, so that 
the privacy budget consumption can be properly penalized  in the \textit{exploration-exploitation} stage.
According to~\cite{DBLP:conf/aistats/HanZWXZ23}, we employ a regression-based algorithm to determine $\Lambda$. 
The \textit{initial} stage lasts $(A + 1) \cdot T_0$ training rounds to evaluate $\Lambda$. Here, $A$ is the size of the action space, and $T_0$ is a hyper-parameter representing the number of times each action is selected.

To ensure that the OMD algorithm achieves $\text{O}(\sqrt{T})$ regret bounds, our preferred choice is to set $\Lambda = \frac{T \cdot \text{OPT}}{\epsilon}$, where OPT is defined in Eq.~\eqref{eq:static_policy}~\cite{DBLP:conf/aistats/HanZWXZ23}. However, OPT cannot be solved accurately in an online scenario.
Therefore, we employ a regression algorithm in the \textit{initial stage} including $(A + 1)\cdot T_0$ training rounds to evaluate $\Lambda$.
Generally speaking, there are two steps in the \textit{initial stage}. 

\textbf{Step \textcircled{1}:} For the first $A\cdot T_0$ rounds, we play different $A$ actions in CMAB evenly for $T_0$ times to gather historical records set $\mathcal{H}^{A\cdot T_0}=\{(\bm{z}^1, r^1),\cdots, (\bm{z}^{A\cdot T_0}, r^{A\cdot T_0})\}$ for all inputs $\bm{z}^{t'}=[a^{t'},\bm{x}^{t'}]$ and reward outputs $r^{t'}$, where ${t'} \in \{1,\cdots, A\cdot T_0\}$.
With the historical records, we can formulate the GPR model as an oracle to predict the reward  $\hat{r}(a,\bm{x}^t)$ and the mean function of reward $\mu(\bm{z}^t)$ with input $\bm{z}^t = [a^t,\bm{x}^t]$, where $t\in\mathcal{T}_0$ and $\mathcal{T}_0$ is defined in Eq.~\eqref{eq: T_0}. 
\begin{subequations}\label{eq:linear_programming} \begin{align}  
&\hat{\text{OPT}}(T_0) = \max_{\bm{o} \in\mathcal{O}} \frac{1}{T_0} \sum_{t \in \mathcal{T}_0} \sum_{a \in\mathcal{A}}  \hat{r}(a,\bm{x}^t) \cdot o_{a,t}, \label{eq: linear_programming objective}\\ 
\text{s.t.}~\frac{1}{T_0} &\sum_{t \in \mathcal{T}_0} \sum_{a\in\mathcal{A}} C_u(a) \cdot o_{a,t}  \leq \frac{\epsilon_{min}^{total}}{T}+2 M\left(T_0\right), \\
\mathcal{T}_0  &=\left\{t: A\cdot T_0+1 \leq t \leq (A+1) \cdot T_0\right\}, \label{eq: T_0}\\  
M\left(T_0\right)  &=\sqrt{A\cdot {E}\left(T_0 \right)+4\cdot \frac{\log (T\cdot U)}{T_0}}, \label{EQ:M(T_0)}\\ 
E\left(T_0 \right)  &= \frac{1}{T_0}\sum_{t\in \mathcal{T}_{0}}(\mu(\bm{z}^{t})-r^{t})^2,\bm{z}^t=[a^t,\bm{x}^{t}],\label{eq: error function of linear_programming}\\
\epsilon_{min}^{total} &= \underset{u}{\arg\min}\{\epsilon_u^{total} \mid \forall u \in \mathcal{U}\},
\end{align}
\end{subequations} 

\textbf{Step \textcircled{2}:} For the latter total $T_0$ rounds, we select actions $a^t$ randomly and observe the reward $r^t$, where $t\in\mathcal{T}_0$. 
Then, we use the observed rewards and GPR model to estimate $\hat{\text{OPT}}(T_0)$ by solving the linear programming problem~\eqref{eq:linear_programming}, where $\bm{o}$ is the solution matrix and we use the GPR model as the oracle to predict the reward $\hat{r}(a,\bm{x}^t)$ and the mean of reward $\mu(\bm{z}^t)$. 
Additionally, $T$ is the maximum training rounds, $U$ is the total number of clients, $T_0$ is the hyper-parameter of rounds to select each action, and $\epsilon_{min}$ is the smallest privacy budget among all clients.
Therefore, we can obtain all the parameters needed to solve problem~\eqref{eq:linear_programming} using any general linear programming solver.

Upon obtaining the estimated value of $\hat{\text{OPT}}\left(T_0\right)$, we can set
\begin{equation}\label{eq: estimated OPT}
    \Lambda=\frac{T}{\epsilon_{min}^{total}}\left(\hat{\text{OPT}}\left(T_0\right)+M\left(T_0\right)\right),
\end{equation}
where $M(T_0)$ is defined in Eq.~\eqref{EQ:M(T_0)}.
Consequently, the error between $\hat{\text{OPT}}\left(T_0\right)$ and OPT is bounded by the estimated error guarantee, as demonstrated in Lemma~\ref{OPT_Z} in Appendix~E, based on the analysis in \cite{DBLP:conf/aistats/HanZWXZ23}.
The detailed algorithm for determining
$\Lambda$ is provided in Alg.~\ref{alg: Initial Stage}.

\begin{algorithm}[!t]
\caption{Initial Stage to Estimate $\Lambda$ of the Server.} 
\label{alg: Initial Stage} 
\textbf{Input:} Training rounds $T$; Exploration rounds $T_0$; Total privacy budget $\bm{\epsilon}^{total}$, Item embedding $\bm{\iota}^{0}$, user embedding $\bm{\nu}^{0}$, Recommender parameters $\bm{\omega}^{0}$. \\
\textbf{Output:} Radius $\Lambda$; Remained budget $\bm{\epsilon}^{re}$; Historical record set $\mathcal{H}^{(A+1)\cdot T_0}$. 
\begin{algorithmic}[1] 
\STATE Distributing $\bm{\iota}^{0}$, $\bm{\nu}^{0}$, $\bm{\omega}^{0}$ to all clients as  the initialization of their local model;\\
\textbf{// Step \textcircled{1} //}\\
\STATE Initializing remained budget $\bm{\epsilon}^{re}\leftarrow\bm{\epsilon}^{total}$ and historical records $\mathcal{H}^0=\emptyset$ for GPR model;
\FOR{ $a^t\in\mathcal{A}$} 
\FOR{$t\in\{(a^t-1)\cdot T_0 + 1,\dots,a^t \cdot T_0\}$} 
 \STATE Receiving the $\bm{\mathrm{X}}^{t}$ from all clients and generate the context $\bm{x}^{t}$ to concatenate $\bm{z}^{t} = [a^t, \bm{x}^{t}]$;
 \STATE 
 Sending $a^{t}$ to all clients and executing \textbf{DPFR TRAINING} (Alg.~\ref{alg:DPFR TRAINING}) to obtain $r^{t}, \bm{\epsilon}_u^{t}$; 
 \STATE Updating $\bm{\epsilon}_u^{re}\leftarrow \bm{\epsilon}^{re}_u -  \bm{\epsilon}_u^{t}$;\\
 \STATE Collecting $\mathcal{H}^t\leftarrow\mathcal{H}^{t-1} \cup (\bm{z}^{t},r^{t}$); 
\ENDFOR 
\ENDFOR 
\STATE Modeling GPR with $\mathcal{H}^{A\cdot T_0}$;\\
\textbf{// Step \textcircled{2} //}\\
\STATE Generating $\mathcal{T}_0$ by Eq.~\eqref{eq: T_0} with $A$ and $T_0$; 
\FOR{$t\in \mathcal{T}_0$} 
\STATE Receiving the $\bm{\mathrm{X}}^{t}$ from all clients and generating the context $\bm{x}^{t}$;
\STATE Sending $a^{t}$ to all clients and executing \textbf{DPFR TRAINING} (Alg.~\ref{alg:DPFR TRAINING}) to obtain $r^{t}, \bm{\epsilon}_u^{t}$; 
 \STATE Updating $\bm{\epsilon}_u^{re}\leftarrow \bm{\epsilon}^{re}_u -  \bm{\epsilon}_u^{t}$;
  \STATE Concatenating $\bm{z}^{t} = [a^t, \bm{x}^{t}]$ and collecting $\mathcal{H}^t\leftarrow\mathcal{H}^{t-1} \cup (\bm{z}^{t},r^{t}$); 
  \STATE Updating GPR model with $\mathcal{H}^{t}$;
\ENDFOR 
\STATE Enumerating $a\in\mathcal{A}$ and $ t\in \mathcal{T}_0$ to calculate $\hat{r}(\cdot,\cdot)$ with GPR model and $\mathcal{H}^{(A+1)\cdot T_0}$;
\STATE Calculating $\mu(\cdot)$ with $\bm{z}^{t},\forall t \in \mathcal{T}_0$ by Eq~\eqref{eq:cal_mean};
\STATE Estimating $\hat{\text{OPT}}\left(T_0\right)$ by solving the linear programming \eqref{eq:linear_programming} with $\hat{r}(\cdot)$ and $\mu(\cdot)$ and $\mathcal{H}^{(A+1)\cdot T_0}$;
\STATE Determining the radius $\Lambda$ with $\hat{\text{OPT}}\left(T_0\right)$ by Eq.~\eqref{eq: estimated OPT}.
\end{algorithmic}
\end{algorithm}

\begin{algorithm}[!t]
\caption{DPFR TRAINING (One Round)} 
\label{alg:DPFR TRAINING} 
\textbf{Input:} Training round $t$,
 Clients set $\mathcal{U}$, Budget allocation  $a^{t}$.\\
\textbf{Output:} Training reward $r^{t}$, Budget consumption $\bm{\epsilon}^{t}$.\\
\textbf{// Clients $u\in\mathcal{U}$ //}
\begin{algorithmic}[1]
\STATE Training the local item embedding $\bm{\iota}^{t}_u$, user embedding $\bm{\iota}^{t}_u$, and recommender model $\bm{\omega}^{t}_u$ with local private data;
\STATE Generating DP noise $\bm{n}_u = [\bm{n}_\omega,\bm{n}_\iota]$ based on budget allocation $a^t$ and specific DP mechanism; 
\STATE Adding DP noise to the local gradient $\nabla\tilde{\bm{\omega}}^t_u = \nabla \bm{\omega}^{t}_u+\bm{n}_\omega$, $\nabla \tilde{\bm{\iota}}_u^{t} = \nabla \bm{\iota}_u^{t} + \bm{n}_\iota$;
\STATE Uploading the noisy gradient $\nabla\tilde{\bm{\omega}}^t_u$ and $\nabla \tilde{\bm{\iota}}_u^{t}$ to server;
\end{algorithmic}
\textbf{// Server //}
\begin{algorithmic}[1]
\STATE Receiving the noisy gradient of item embedding $\nabla\tilde{\bm{\iota}}^{t}_u$ and recommender model $\nabla\tilde{\bm{\omega}}^t_u$ of from all clients;
\STATE Performing model aggregation to obtain global $\bm{\iota}^{t}$, $\bm{\omega}^{t}$ with  $\nabla\tilde{\bm{\iota}}^{t}_u,\ \nabla\tilde{\bm{\omega}}^t_u,\ \forall u \in \mathcal{U}$ and training information $\mathbf{X}^t$;
\STATE Distributing the aggregated parameters $\bm{\iota}^{t}$ and $\bm{\omega}^{t}$ to all clients and observing the training reward $r^{t}$; 
\STATE Calculating the consumption budgets $\bm{\epsilon}^{t}$ for all clients.
\end{algorithmic}
\end{algorithm}

\begin{algorithm}[t]
  \caption{The BGTplanner Algorithm of Server.} 
 \label{alg:The BGTplanner Algorithm of Server} 
 \textbf{Input:} Training rounds $T$; Exploration rounds $T_0$; Total privacy budget $\bm{\epsilon}^{total}$. 
\begin{algorithmic}[1]  
\STATE Determining the radius $\Lambda$ by Alg.~\ref{alg: Initial Stage} and obtaining $\bm{\epsilon}^{re}$ and $\mathcal{H}^{(A+1)\cdot T_0}$;\label{line: obtain Z}
\STATE Randomly initializing the dual variables $\bm{\lambda}^{(A+1)\cdot T_0 + 1}$ by Eq.~\eqref{eq:lambdaupdate} in the radius $\Lambda$; 
\FOR{$t\in\{(A+1)\cdot T_0 + 1,\cdots,T\}$}
\STATE \algorithmicif\ All clients exhaust their privacy budget 
\algorithmicthen\ Exiting model training \algorithmicend\ \algorithmicif; \\
\textbf{// Step~\textcircled{1} //}
\STATE Receiving the $\bm{\mathrm{X}}^{t}$ from all clients and generating the context $\bm{x}^{t}$;\label{line: server collect}
\STATE Predicting $\mu(\bm{z}^t)$ with $\bm{z}^t = [a, \bm{x}^{t}], a\in\mathcal{A}$ by Eq.~\eqref{eq:cal_mean};
\STATE Obtaining $\beta^t(a), \forall a \in \mathcal{A}$ with $\mu(\bm{z}^t)$ and $\bm{\lambda}^{t}$ based on Eq.~\eqref{eq:rewardfunction};\label{line:score function}\\
\textbf{// Step~\textcircled{2} //}
\STATE Calculating $p^{t}(a), \forall a \in \mathcal{A}$ with $\beta^{t}(a)$ based on Eqs.\eqref{eq: probability_action_a}-\eqref{eq: probability_action_b}; \label{line: sample action begin}
\STATE Sampling a budget allocation action $a^{t} \sim \bm{p}^{t}$; \label{line: sample action end}
\STATE Sending $a^{t}$ to all clients and executing \textbf{DPFR TRAINING} (Alg.~\ref{alg:DPFR TRAINING}) to obtain $r^{t}, \bm{\epsilon}_u^{t}$; \\
\textbf{// Step~\textcircled{3} //}
\STATE Updating $\bm{\epsilon}_u^{re}\leftarrow \bm{\epsilon}^{re}_u -  \bm{\epsilon}_u^{t}$;
\STATE Concatenating $\bm{z}^{t} = [a^t, \bm{x}^{t}]$;
\STATE Collecting $\mathcal{H}^t\leftarrow\mathcal{H}^{t-1} \cup (\bm{z}^{t},r^{t}$); 
\STATE Updating GPR model with $\mathcal{H}^{t}$;
\STATE Updating the dual variables $\bm{\lambda}^{t+1}$ by Eq.~\eqref{eq:lambdaupdate} with $\bm{\epsilon}_u^{t}$ in the radius $\Lambda$. \label{line: server update end}
\ENDFOR
\end{algorithmic}
\end{algorithm}


\subsubsection{\textbf{Exploration-Exploitation Stage}} In this stage, BGTplanner utilizes the outputs of Predictor and Constrainer, and the CMAB framework to allocate the privacy budget. 
In general, the \textit{Exploration-Exploitation} stage includes three steps in each round $t\in\left((A + 1) \cdot T_0,T\right]$, described as follows:

\textbf{Step~\textcircled{1} (Generating Action Scores)}: After observing context $\bm{x}^t$, BGTplanner acquires the estimated mean of rewards ${\mu}(\bm{z}^{t})$ by Predictor, where $\bm{z}^{t} = [a,\bm{x}^t],\forall a\in\mathcal{A}$, and calculates the predicted Lagrangian as an appropriate score function by  considering both rewards and penalties:
\begin{equation}
\label{eq:rewardfunction}
\beta^t(a)=\mu(\bm{z}^{t})-\langle\frac{\bm{\epsilon}^{total}}{T} -\bm{\epsilon}^t,\bm{\lambda}^{t}\rangle,\forall a \in\mathcal{A},
\end{equation}
where $\bm{\lambda}^t$ is updated by Constrainer in round $t-1$.

\textbf{Step~\textcircled{2} (Selecting the Optimal Action)}: Based on the estimated score for taking action $a$, BGTplanner generates probabilities for all arms corresponding to different privacy budget allocation actions in the CMAB model. Let $a^{max}$ denote the action with the highest score $\beta^{t}({a}^{max})$, the probability to sample each action can be calculated as:
\begin{equation}
\label{eq: probability_action_a}
    p^{t}(a)=\frac{1}{A+\gamma\,\left(\,\beta^{t}({a}^{max})-\beta^{t}({a})\,\right)}, \forall a\in\mathcal{A}-\{{a}^{max}\}, 
\end{equation}
where $\gamma$ is a hyper-parameter representing the extent to which BGTplanner prefers exploitation over exploration. If $\gamma $ is smaller, $ p^{t}(a)$ is closer to $\frac{1}{A}$, implying that BGTplanner samples actions in a more uniform manner, and hence prefers exploration.  The probability of ${a}^{max}$ is: 
\begin{equation}
\label{eq: probability_action_b}
    p^{t}({a}^{max})=1-\sum_{a\in\mathcal{A}, a \neq {a}^{max}} p^{t}(a).
\end{equation}
The final action $a^{t}$ is sampled with the probability distribution in Eqs.\eqref{eq: probability_action_a} and \eqref{eq: probability_action_b} under context $\bm{x}^{t}$ and  the variable $\bm{\lambda}^{t}$.

\textbf{Step~\textcircled{3} (Updating Penalties)}:
BGTplanner leverages Constrainer to update the new dual variable vector $\bm{\lambda}^{t+1}$, which will be used for budget allocation in the next training round on the fly.


\subsection{Details of BGTplanner}\label{sec: BGTplanner algorithm}
We provide a detailed description of the BGTplanner algorithm in Alg.~\ref{alg:The BGTplanner Algorithm of Server}. The algorithm operates in two stages: an initial stage, described in Alg.~\ref{alg: Initial Stage}, and a subsequent exploration-exploitation stage, which includes three main steps in each round. In each training round of the second stage, BGTplanner predicts the scores for all actions, selects the optimal action and adjusts the dual variables based on observed rewards and the remaining privacy budgets.
With the chosen budget allocation action, the server and all clients execute one round of DPFR training, as detailed in Alg.~\ref{alg:DPFR TRAINING}, to update the model and embedding parameters. This process navigates a dynamic training environment, where the privacy budget is strategically allocated to maximize model performance while adhering to privacy constraints.
The iterative process continues until the privacy budget is exhausted or the desired training performance is achieved.

\section{Analysis and Implementation}
Additionally, we undertake privacy, regret and complexity analysis to substantiate our approach's theoretical guarantee.

\subsection{Privacy Analysis.}
Given $(\epsilon^{total}_u,  \delta^{total}_u)$, each client can allocate its budget as follows. If the Laplace mechanism is adopted,  $\delta^{t}_u = 0$ and each client can employ the native composition rule~\cite{Kang2020} to accumulate the total privacy loss $\epsilon^{total}_u$ during the multiple rounds.  Otherwise, if the Gaussian mechanism is adopted, clients can convert the budget to the RDP (R\'enyi Differential Privacy) budget, which can then be simply split into multiple training rounds~\cite{Girgis2021}. In this case, Eq.~\eqref{EQ:c1} should be updated by the RDP budget. 
It has been proved in \cite{Kang2020, Girgis2021} that if clients follow these methods to accumulate their privacy loss,  $(\epsilon^{total}_u,  \delta^{total}_u)$-DP is satisfied on client $u$. 

\subsection{Overhead Analysis.} BGTplanner incurs additional computation and memory overhead for making budget allocation decisions. Through analysis, we show that the overhead cost is lightweight and affordable for the server.  For Predictor, the complexity for calculating the Gaussian Kernel is $\mathrm{O}(A\cdot d\cdot t)$, with $d$ dimension of the input vector. The computational complexity for estimating the reward $\mu(\bm{z}^{t})$ in Eq.~\eqref{eq:cal_mean} is $\mathrm{O}(A\cdot d\cdot t^2)$.
Calculating the score vector $\beta^{t}(a)$ by Eq.~\eqref{eq:rewardfunction} incurs a complexity of $\mathrm{O}(A \cdot U)$, and computing the probability vector $\bm{p}^{(t)}$ via Eq.~\eqref{eq: probability_action_a}-Eq.~\eqref{eq: probability_action_b} has a  complexity of $\mathrm{O}(A)$. 
To conclude, the total computation complexity per round 
is $\mathrm{O}(A\cdot(t^2\cdot d+U))$. 
According to \cite{DBLP:journals/ftopt/Hazan16}, the computational complexity of Constrainer is also small  by using the MOD algorithm.

Additionally, the memory complexity is $\mathrm{O}(A \cdot t^2 + U)$, primarily because of the storage requirements for 1) the GP kernels, 2) estimated means, scores and probabilities for all actions, and 3) client budgets and penalties information.

\subsection{Regret Analysis.}
We then discuss the regret performance of BGTplanner. 
We define the optimal expected reward for one round while ensuring resource utilization remains within a predefined budget as $\text{OPT}$~\cite{DBLP:conf/aistats/HanZWXZ23}, \emph{i.e.},
\begin{definition}\label{def:OPT}
Let $\hat{r}^*(a, \bm{x})$ denote the optimal regression to characterize the expectation of reward distribution and the static randomized policy $p_a^*:\mathcal{X}\rightarrow P_\mathcal{A}$ as the optimal mapping function (defined on context distribution space $\mathcal{X}$) assigning probabilities to actions in $\mathcal{A}$. We have
\begin{equation} 
\begin{aligned}  
 &\text{OPT} = \max_{p_a^*:\mathcal{X}\rightarrow P_\mathcal{A}}\ \mathbb{E}_{\bm{x} \sim P_{\mathcal{X}}}\left[\sum_{a \in\mathcal{A}} p_a^*( \bm{x})\, \hat{r}^*(a, \bm{x})\right], \\ 
 &\text { s.t. }\mathbb{E}_{\bm{x} \sim P_{\mathcal{X}}}\left[\sum_{a \in\mathcal{A}} p_a^*( \bm{x}) \, C_u(a)\right] \leq \frac{\epsilon_u^{total}}{T}\,,\, \forall u\in\mathcal{U}.
\end{aligned} 
\label{eq:static_policy} 
\end{equation}
Here, $P_\mathcal{A}$ and $P_{\mathcal{X}}$ represent  probability distribution over the action set $\mathcal{A}$ and context space $\mathcal{X}$, respectively.  $C_u: \mathcal{A} \rightarrow \mathbb{R}_+$ is a mapping from action to privacy budget consumption.
\end{definition}

With Definition~\ref{def:OPT}, let $T\cdot \text{OPT}$ denote the upper bound of the optimal reward in online scenarios~\cite{DBLP:conf/nips/AgrawalD16}, we can minimize the regret between the OPT and the actual reward obtained by BGTplanner online algorithm as:
\begin{equation}  
 \text{Reg}(T)=T \cdot \text{OPT} - \sum_{t=1}^T \mathbb{E}\left[r^{t} \mid a^{t},\bm{x}^{t}\right].
\end{equation}

We first give the error guarantee of the estimation of $\ell_1$-radius $\Lambda$.
\begin{lemma}   
\label{OPT_Z}
Denoting the optimal value of \eqref{eq:linear_programming} by $\hat{\text{OPT}}\left(T_0\right)$, and setting $\Lambda=\frac{T}{\epsilon_{min}^{total}}\left(\hat{\text{OPT}}\left(T_0\right)+M\left(T_0\right)\right)$, we have with probability at least $1-O\left(1 / T^2\right)$,
\begin{equation}
    \frac{T \cdot \text{OPT}}{\epsilon^{total}_{min}} \leq \Lambda \leq\left(\frac{6 T \cdot M\left(T_0\right)}{\epsilon^{total}_{min}}+1\right)\left(\frac{T \cdot \text{OPT}}{\epsilon^{total}_{min}}+1\right),
\end{equation}  
\end{lemma}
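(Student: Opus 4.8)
The plan is to establish the two inequalities in Lemma~\ref{OPT_Z} separately, treating the lower bound and the upper bound as consequences of a single high-probability event on which the estimation error of the GPR oracle is controlled. First I would define the "good event" $\mathcal{E}$ on which the empirical prediction error $E(T_0)$ computed in Eq.~\eqref{eq: error function of linear_programming} is close to the true expected squared error of the GPR reward model, and the empirical average reward over the rounds $t \in \mathcal{T}_0$ concentrates around its expectation. A Hoeffding/Azuma-style concentration argument over the $T_0$ rounds (using boundedness of $r^t$ and of the predicted rewards), combined with a union bound over the $A$ actions and $U$ clients, gives that $\mathcal{E}$ holds with probability at least $1 - O(1/T^2)$; this is exactly why the $\sqrt{A \cdot E(T_0) + 4\log(TU)/T_0}$ term $M(T_0)$ appears as the slack in Eq.~\eqref{EQ:M(T_0)} and why the same slack is added to the budget constraint in the linear program~\eqref{eq:linear_programming}.

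Next, on the event $\mathcal{E}$, I would argue the lower bound $\frac{T \cdot \text{OPT}}{\epsilon^{total}_{min}} \le \Lambda$. The idea is that the optimal static policy $p^*_a$ from Definition~\ref{def:OPT}, restricted to the rounds in $\mathcal{T}_0$, is (up to the concentration slack $M(T_0)$) a feasible solution of the linear program~\eqref{eq:linear_programming}: its per-round budget consumption satisfies the relaxed constraint $\frac{\epsilon^{total}_{min}}{T} + 2M(T_0)$ because the true constraint $\le \frac{\epsilon^{total}_u}{T}$ holds for every $u$ and $\epsilon^{total}_{min} \le \epsilon^{total}_u$. Hence $\hat{\text{OPT}}(T_0) \ge \text{OPT} - M(T_0)$, where the $-M(T_0)$ accounts for replacing the true reward $\hat r^*$ by the GPR estimate $\hat r$ and for the sampling error over $\mathcal{T}_0$. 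Rearranging and multiplying by $\frac{T}{\epsilon^{total}_{min}}$ yields $\Lambda = \frac{T}{\epsilon^{total}_{min}}(\hat{\text{OPT}}(T_0) + M(T_0)) \ge \frac{T \cdot \text{OPT}}{\epsilon^{total}_{min}}$.

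For the upper bound I would go the other direction: on $\mathcal{E}$, any feasible solution of the linear program~\eqref{eq:linear_programming} has objective value at most $\text{OPT} + c\, M(T_0)$ for a small constant $c$, because the relaxed budget constraint only loosens the true constraint by an additive $2M(T_0)$ and the reward estimate deviates from $\hat r^*$ by at most $M(T_0)$; scaling a feasible point to restore exact feasibility loses at most a multiplicative factor proportional to $M(T_0)$. Therefore $\hat{\text{OPT}}(T_0) \le (1 + O(M(T_0)))\,\text{OPT} + O(M(T_0))$, and substituting into $\Lambda = \frac{T}{\epsilon^{total}_{min}}(\hat{\text{OPT}}(T_0) + M(T_0))$ and crudely bounding the cross terms gives the stated product form $\left(\frac{6T \cdot M(T_0)}{\epsilon^{total}_{min}} + 1\right)\left(\frac{T \cdot \text{OPT}}{\epsilon^{total}_{min}} + 1\right)$; the constant $6$ and the "$+1$" shifts are just the result of absorbing lower-order terms. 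I would structure the write-up to mirror the proof of the analogous lemma in~\cite{DBLP:conf/aistats/HanZWXZ23}, citing it for the LP-perturbation bookkeeping, and only re-derive the concentration step carefully since here the oracle is a GPR model rather than a generic regression oracle.

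The main obstacle I expect is the concentration step for the GPR predictor: unlike an i.i.d. regression oracle, the GPR predictions $\mu(\bm z^t)$ depend on the entire history $\mathcal{H}^{t-1}$ and the contexts $\bm x^t$ are not i.i.d. (they evolve with the dynamic user--item interactions), so justifying that $E(T_0)$ is a valid proxy for the true predictive error requires either an assumption that the contexts in the initial stage are drawn from $P_{\mathcal X}$ (or are exchangeable), or a martingale argument that handles the adaptivity of the GPR updates. I would state this as an explicit regularity assumption on the initial-stage sampling (each action played $T_0$ times with contexts sampled from $P_{\mathcal X}$) and then invoke a bounded-differences inequality; making that assumption precise, and checking it is consistent with the algorithm's behavior in Alg.~\ref{alg: Initial Stage}, is the delicate part of the argument.
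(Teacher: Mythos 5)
Your proposal is correct and follows essentially the same route as the paper, which in fact gives no proof of Lemma~\ref{OPT_Z} itself but defers entirely to Lemma~4.1 of \cite{DBLP:conf/aistats/HanZWXZ23}; your three-step reconstruction (a concentration event defining $M(T_0)$ with the $\log(T\cdot U)/T_0$ union-bound term, feasibility of the optimal static policy in the relaxed LP for the lower bound, and LP-perturbation/rescaling bookkeeping for the upper bound with the constant $6$ absorbing cross terms) is exactly the argument of that cited lemma. Your closing caveat about the adaptivity of the GPR oracle and the non-i.i.d.\ contexts is a genuine gap in the paper's wholesale reliance on the cited result rather than a defect of your proof, and stating it as an explicit assumption, as you propose, is the right way to handle it.
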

Here, when $\epsilon^{total}_{min}=\mathrm{O}(T\cdot M(T_0))$, we can donate $\Lambda\lesssim \frac{T \cdot \text{OPT}}{\epsilon^{total}_{min}}+1$, where the notation `$\lesssim$' denotes a represents multiple that is less than. The comprehensive proof of Lemma~\ref{OPT_Z} can be found in Lemma 4.1 in~\cite{DBLP:conf/aistats/HanZWXZ23} .
With 
the estimation error guarantee of $\Lambda$ in Lemma~\ref{OPT_Z}, the OMD to update $\bm{\lambda}$ 
achieves optimal regret bounds $\mathrm{O}(\sqrt{T})$~(Lemma 2.2 in~\cite{DBLP:conf/aistats/HanZWXZ23}).
Therefore, we can obtain the following regret guarantee of Alg.~\ref{alg:The BGTplanner Algorithm of Server} and we rewrite the theorem as follows:
\begin{theorem} \label{theorem2}
    Denote $T$ as the total training rounds, $T_0$ as the number of times  each action is selected in the initial phase, and $A$ as the number of actions, we have $\text{Reg}(T)=T \cdot \text{OPT} - \sum_{t=1}^T \mathbb{E}\left[r^{t} \mid a^{t},\bm{x}^{t}\right]$ of BGTplanner satisfying:
    \begin{equation}
     \begin{aligned}   \text{Reg}(T) \lesssim & \left(\frac{T \cdot \text{OPT}}{\epsilon_{min}^{total}}+1\right) \sqrt{A\cdot T\left[\operatorname{Reg}^r(T)+1\right]} \\     & +\left(\frac{T \cdot \text{OPT}}{\epsilon_{min}^{total}}+1\right) A\cdot T_0,
    \end{aligned}   
    \end{equation}
where  $\epsilon_{min}^{total} = \arg\min_{u}\{\epsilon_u^{total} \mid \forall u \in \mathcal{U}\}$ is the minimum budget among the clients. Meanwhile, 
$\epsilon_{min}^{total}$  constrains $T_0$ with $\epsilon_{min}^{total}>\max \left\{(A+2)\cdot T_0, T \cdot M\left(T_0\right)\right\}$, where $M(\cdot)$ is the bias function of constraints of a linear programming problem defined in Appendix~C. 
$\operatorname{Reg}^r(T)$ is the regret between the optimal regression function $R^*(a, \bm{x})$ and GPR-based reward Predictor within total $T$ rounds, with an upper bound $\mathrm{O}(\sqrt{T\cdot \log(T)})$ proved in~\cite{DBLP:conf/icml/SrinivasKKS10}.
\end{theorem}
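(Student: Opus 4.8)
The plan is to reduce Theorem~\ref{theorem2} to the general regret bound for contextual bandits with long-term knapsack-type constraints established in \cite{DBLP:conf/aistats/HanZWXZ23}, treating the GPR-based reward Predictor as the regression oracle required by that framework. First I would set up the correspondence: the privacy-budget constraint \eqref{EQ:c1}, after the relaxation to $\sum_{t=1}^{T}\epsilon_u^t \le \epsilon_u^{total}$ described in Section~\ref{sec:Online Dual Variables Update}, plays the role of the per-round resource constraint with consumption function $C_u(a^t)$ and total budget $\epsilon_u^{total}$; the dual variables $\bm{\lambda}^t$ updated by the OMD rule \eqref{eq:lambdaupdate} over the domain $\mathcal{V}$ of $\ell_1$-radius $\Lambda$ are exactly the Lagrange multipliers in that framework; and the action-sampling distribution \eqref{eq: probability_action_a}--\eqref{eq: probability_action_b} driven by the scores $\beta^t(a)$ in \eqref{eq:rewardfunction} is the inverse-gap-weighting exploration policy. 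Once this dictionary is fixed, the total regret decomposes into three pieces: (i) the cost of the initial stage, which lasts $(A+1)\cdot T_0$ rounds, each contributing at most $\mathrm{O}(1)$ reward loss, hence an $\mathrm{O}(A\cdot T_0)$ term, scaled by the competitive ratio factor $\big(\tfrac{T\cdot\text{OPT}}{\epsilon_{min}^{total}}+1\big)$ that arises because the Lagrangian comparison is against $\Lambda\cdot(\text{per-round slack})$; (ii) the $\mathrm{O}(\sqrt{T})$ regret of the OMD updates on the dual, valid once $\Lambda$ is correctly calibrated; and (iii) the estimation regret $\operatorname{Reg}^r(T)$ of the reward Predictor, which enters under a square root after a Cauchy--Schwarz / potential argument, giving the $\sqrt{A\cdot T[\operatorname{Reg}^r(T)+1]}$ term.

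The key steps, in order, are: (1) invoke Lemma~\ref{OPT_Z} to guarantee that with probability $1-\mathrm{O}(1/T^2)$ the chosen radius $\Lambda = \tfrac{T}{\epsilon_{min}^{total}}(\hat{\text{OPT}}(T_0)+M(T_0))$ both dominates $\tfrac{T\cdot\text{OPT}}{\epsilon_{min}^{total}}$ (so the optimal static policy is feasible in the dual formulation) and is not too large (so the OMD diameter penalty stays controlled), which requires the stated condition $\epsilon_{min}^{total}>\max\{(A+2)T_0, T\cdot M(T_0)\}$; (2) apply Lemma~2.2 of \cite{DBLP:conf/aistats/HanZWXZ23} to bound the OMD-on-$\bm{\lambda}$ regret by $\mathrm{O}(\Lambda\sqrt{T})$ after substituting the calibrated $\Lambda$; (3) bound the reward-estimation contribution using that $\mu(\bm{z}^t)$ from \eqref{eq:cal_mean} is an online regression with cumulative squared error controlled by $\operatorname{Reg}^r(T)$, and combine with the inverse-gap-weighting guarantee so that the exploration overhead is $\mathrm{O}(\sqrt{A\,T\,(\operatorname{Reg}^r(T)+1)})$; (4) add the initial-stage loss $\mathrm{O}(A\,T_0)$; (5) multiply the last two bounds by the competitive factor $\big(\tfrac{T\cdot\text{OPT}}{\epsilon_{min}^{total}}+1\big)$ coming from the Lagrangian-to-primal conversion, and finally substitute the known kernel-bandit bound $\operatorname{Reg}^r(T)=\mathrm{O}(\sqrt{T\log T})$ from \cite{DBLP:conf/icml/SrinivasKKS10} to make the rate explicit.

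The main obstacle I expect is step (3): verifying that the GPR Predictor legitimately satisfies the regression-oracle assumptions required by the black-box reduction of \cite{DBLP:conf/aistats/HanZWXZ23}. That framework is typically stated for a generic online regression oracle with a per-step or cumulative error guarantee against the best regressor in a fixed class, whereas here the Predictor is a specific Gaussian-process posterior mean with a time-decaying composite kernel, and the relevant guarantee $\operatorname{Reg}^r(T)=\mathrm{O}(\sqrt{T\log T})$ is the GP-UCB-style information-gain bound of \cite{DBLP:conf/icml/SrinivasKKS10}, which is an \emph{in-model} (realizable, Bayesian) bound rather than an agnostic regret against an arbitrary comparator $\hat r^*(a,\bm{x})$. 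I would need to argue that the composite squared-exponential/Ornstein--Uhlenbeck kernel has bounded maximal information gain so that the $\sqrt{T\log T}$ rate still holds, and that plugging this $\operatorname{Reg}^r(T)$ into the generic bound is valid — either by appealing to a realizability assumption on the reward function $\hat r^*$ (it lies in the RKHS of the kernel, or is well-approximated by it) or by a more careful argument bridging the Bayesian GP regret and the adversarial-style regret the knapsack-bandit analysis expects. The remaining steps are essentially bookkeeping: the dual-variable OMD analysis, the initial-stage accounting, and the competitive-ratio scaling are all inherited more or less directly from \cite{DBLP:conf/aistats/HanZWXZ23} once the oracle correspondence is in place.
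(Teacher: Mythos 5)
Your proposal follows essentially the same route as the paper's proof: both reduce Theorem~\ref{theorem2} to Theorem~4.1 of \cite{DBLP:conf/aistats/HanZWXZ23} by treating the GPR posterior mean as the reward regression oracle with $\operatorname{Reg}^r(T)=\mathrm{O}(\sqrt{T\log T})$ from \cite{DBLP:conf/icml/SrinivasKKS10}, using Lemma~\ref{OPT_Z} to calibrate $\Lambda\lesssim \frac{T\cdot\text{OPT}}{\epsilon_{min}^{total}}+1$, and summing the initial-stage cost $\bigl(\frac{T\cdot\text{OPT}}{\epsilon_{min}^{total}}+1\bigr)A\,T_0$ with the exploration-exploitation term. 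The obstacle you flag in step (3) --- that the GP-UCB-style in-model guarantee must be reconciled with the agnostic regression-oracle assumption of the black-box reduction --- is a genuine gap that the paper's own one-paragraph proof simply asserts away rather than resolves, so your treatment is if anything more careful than the original.
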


\begin{proof}
We can directly deduce the regret from the Theorem 4.1 in~\cite{DBLP:conf/aistats/HanZWXZ23} by considering reward regression oracle $\hat{r}(a, \bm{x})$ as GPR model with an upper bound $\mathrm{O}(\sqrt{T\cdot \log(T)})$ regret~\cite{DBLP:conf/icml/SrinivasKKS10} and constraint regression oracle $C(a)$ as an unbiased estimation.
In Alg.~\ref{alg:The BGTplanner Algorithm of Server}, we use $\text{O}(A\cdot T_0)$ rounds in the initial stage, the expected regret incurred by Alg.~\ref{alg: Initial Stage} is upper bounded by $\text{O}(\frac{T\cdot \text{OPT}}{\epsilon_{min}^{total}}+1)\cdot A \cdot T_0$, with the guarantee by $\Lambda \lesssim \frac{T \cdot \text{OPT}}{\epsilon_{min}^{total}}+1$ in Lemma~\ref{OPT_Z}. For the second stage in Alg.~\ref{alg:The BGTplanner Algorithm of Server}, the expected regret for the exploration-exploitation stage Alg.~\ref{alg:The BGTplanner Algorithm of Server} is upper bounded by $\left(\frac{T \cdot \text{OPT}}{\epsilon_{min}^{total}}+1\right) \sqrt{A \cdot T\left[\operatorname{Reg}^r(T)+1\right]}$, where $\sqrt{A \cdot T\left[\operatorname{Reg}^r(T)+1\right]}$ is the upper bound of CMAB method employing GPR model as reward regression oracle and OMD method to update dual variables online in total $T$ rounds (Theorem 4.1~\cite{DBLP:conf/aistats/HanZWXZ23}). 
Then, Theorem \ref{theorem2} holds by adding the regret in these two stages together.
\end{proof}

Theorem \ref{theorem2} guarantees that BGTplanner achieves an $\mathrm{O}(\sqrt{T})$ regret bound. Such a sub-linear regret bound is highly desirable, indicating that BGTplanner will not significantly diverge from the optimal budget allocation solution even under dynamic contexts. 

\begin{figure*}[t] 
\centering
    \subfigure{  \includegraphics[width=0.99\linewidth]{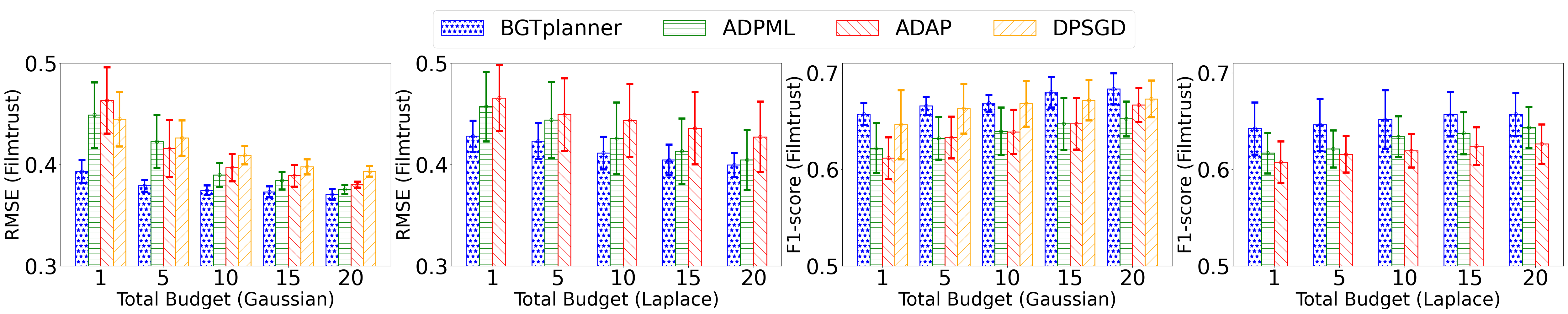}
    \vspace{-2mm}
    }
    \subfigure{     \includegraphics[width=0.99\linewidth]{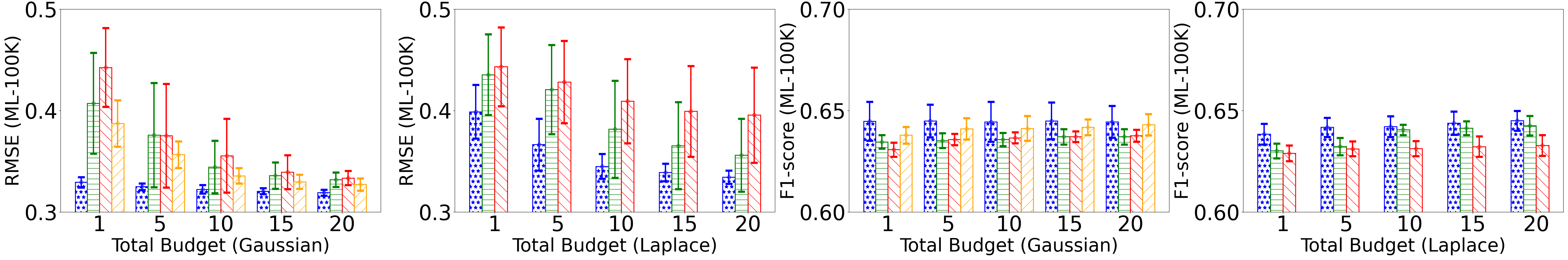}}
    \vspace{-2mm}
    \label{fig:all_baselines_ML-100K}
    \subfigure{     \includegraphics[width=0.99\linewidth]{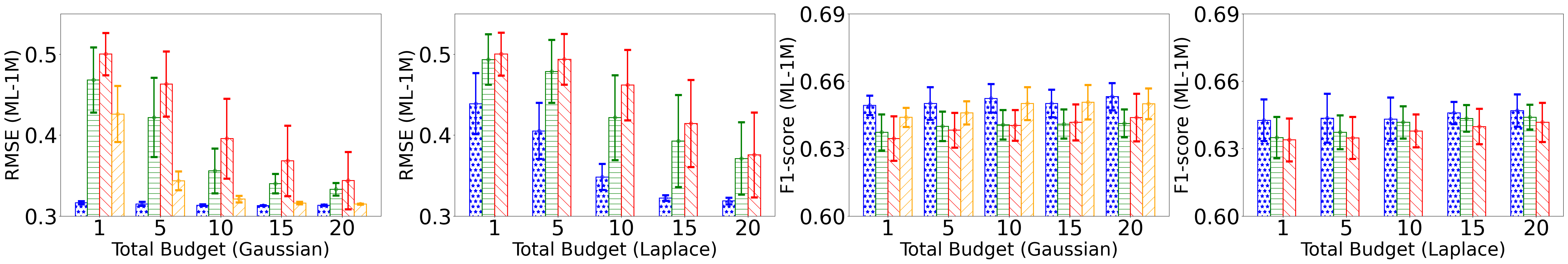}
\label{fig:all_baselines_ML-1M_all}
    }
\caption{Performance of all baselines across different settings of privacy budgets and DP noise mechanisms in Filmtrust, ML-100K, and ML-1M dataset, respectively. Lower RMSE values indicate better performance, while higher F1-scores reflect better performance.}
\label{fig:all_baselines}
\end{figure*}

\begin{table*}[t]
\centering
\renewcommand{\arraystretch}{1.25}
\caption{Performance in RMSE and F1-score under various datasets and noise models with the default privacy budget $\epsilon_u^{total}=10$. All methods are tested five times, and the average of the results is presented.}
\label{table:performance}
\resizebox{1\linewidth}{!}{
\begin{tabular}{|c|cc|cc|cc|cc|cc|cc|}
\toprule
\multirow{3}{*}{Methods}& \multicolumn{6}{c|}{RMSE\, ($\downarrow$) }  & \multicolumn{6}{c|}{F1-score\, ($\uparrow$)}         \\ \cline{2-13}
& \multicolumn{2}{c|}{Filmtrust}             & \multicolumn{2}{c|}{ML-100K}   & \multicolumn{2}{c|}{ML-1M}               & \multicolumn{2}{c|}{Filmtrust}             & \multicolumn{2}{c|}{ML-100K}   & \multicolumn{2}{c|}{ML-1M}               \\ 
&GM  &   LM & GM &  LM & GM & LM & GM  &   LM& GM & LM  & GM & LM\\ 
\midrule
FedSGD  & \multicolumn{2}{c|}{0.2905} & \multicolumn{2}{c|}{0.2017} & \multicolumn{2}{c|}{0.2006}  & \multicolumn{2}{c|}{0.8354}    & \multicolumn{2}{c|}{0.9018}   & \multicolumn{2}{c|}{0.9137} \\
OURS & \textbf{0.3746}& \textbf{0.4114}  & \textbf{0.3223} &\textbf{0.3449} & \textbf{0.3135} &      \textbf{0.3484}          & \textbf{0.6685}  & \textbf{0.6517} & \textbf{0.6444}&  \textbf{0.6421}& \textbf{0.6524}&      \textbf{0.6431}\\  
ADPML & 0.3898& 0.4258  &  0.3442  & 0.3815  & 0.3558 & 0.4217 & 0.6395&  0.6338 &  0.6358 &  0.6404  & 0.6407 &  0.6417\\ 
ADAP & 0.3970  & 0.4435 &  0.3553  & 0.4092 & 0.3958 & 0.4621 &  0.6389 &  0.6194  &  0.6366& 0.6312& 0.6404& 0.6379 \\
DPSGD   &0.4093  & \diagbox[]{}{}&0.3356 & \diagbox[]{}{} & 0.3210 &\diagbox[]{}{}& 0.6680 & \diagbox[]{}{}&0.6412 &  \diagbox[]{}{} & 0.6501 &\diagbox[]{}{}\\ 
\bottomrule
\end{tabular}
}
\small
\begin{tabbing}
\textbf{Note:} Lower is better ($\downarrow$). Higher is better ($\uparrow$). 
\end{tabbing}
\vspace{-4mm}
\end{table*}

\section{Performance Evaluation}\label{sec:perfomance}
In this section, we report the experimental results for comparing BGTplanner and the state-of-the-art baselines. To facilitate the peer review, we also anonymously open the source code of our system deployment\footnote{https://anonymous.4open.science/r/BGTplanner-AAAI}.


\subsection{Experimental Settings}
\subsubsection{Datasets} For our experiments, we exploit public rating datasets Filmtrust~\cite{Monti2017}, MovieLens 100k (ML-100K)~\cite{Harper2015}, and MovieLens 1M (ML-1M)~\cite{Harper2015}, which are commonly used in evaluating recommender systems. 
Filmtrust includes 18,662 ratings from 874 users, ML-100k includes 100,000 ratings for 1,682 movies from 943 users, and ML-1M includes 1,000,209 ratings for 3,952 movies from 6,040 users. All rating scores are normalized to the range $[0,1]$.
We implement FedGNN~\cite{wu2021fedgnn} as the recommendation model and set the number of clients for FL according to the number of users in Filmtrust and ML-100K, respectively. Due to the larger number of users in ML-1M, we randomly allocate its 6,040 users to 605 clients. 

Additionally, we split the training set and test set by the ratio of 4:1 for all rating datasets.
To further simulate the dynamic training contexts with continuously generated rating records, the training set is further randomly divided into two equal-size parts. At the beginning of training, only ratings in the first part can be used for training. As the training progresses, training samples in the second part are  randomly and evenly sampled and added to the first part for enhance training. 
Besides, all clients randomly select pseudo items from the set of non-interacted items for training FedGNN in each round~\cite{wu2021fedgnn}, and the number is set to 50, 50, and 400 for Filmtrust, ML-100K, and ML-1M, respectively. 

\begin{figure*}[t] 
\centering
    \subfigure{  \includegraphics[width=0.985\linewidth]{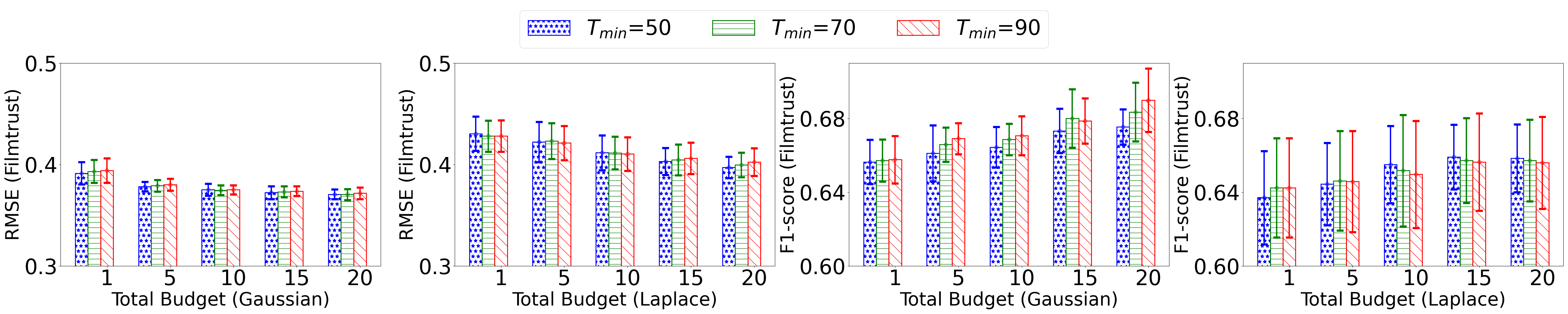}
    \label{fig:T_min_Filmtrust_all}
    }
    \subfigure{     
\includegraphics[width=0.985\linewidth]{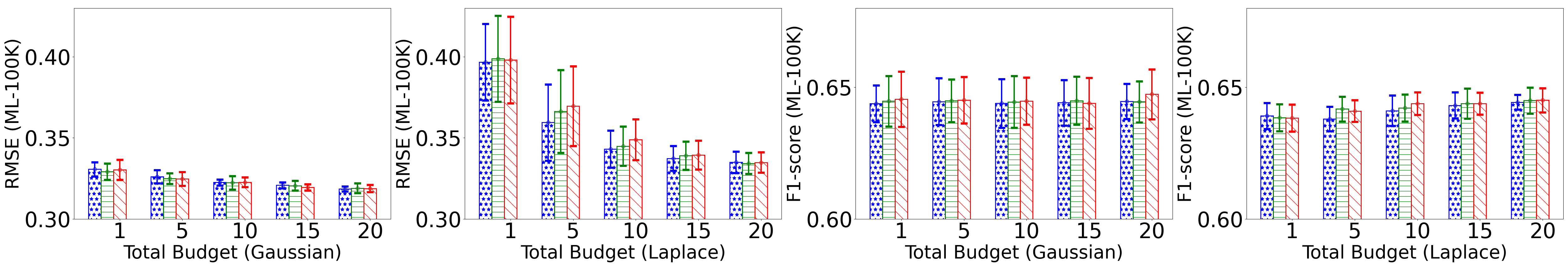}
    }
    \label{fig:T_min_ML-100K_all}
    \subfigure{     \includegraphics[width=0.985\linewidth]{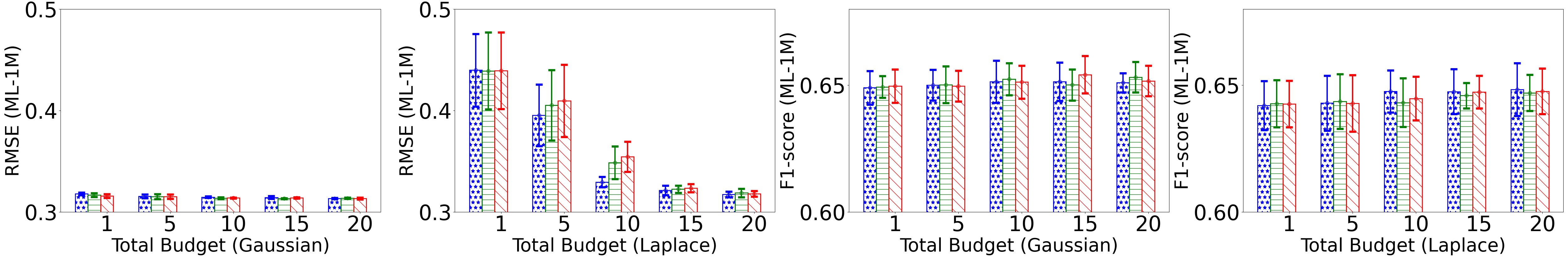}
\label{fig:T_min_ML-1M_all}
    }
\caption{The sensitivity experiments of BGTplanner with $T_{min}$, which controls the range of the action space, were conducted across different privacy budget settings and DP noise mechanisms using the Filmtrust, ML-100K, and ML-1M datasets, respectively. Lower RMSE values indicate better performance, while higher F1-scores reflect better performance.}
\label{fig:hyper_T_min}
\end{figure*}

\subsubsection{System Settings}{\color{black}
Referring to \cite{DBLP:conf/ijcai/YangZHWS23}, we set the hyper-parameter of GPR the length scale parameter $s = 0.2$ and $\alpha=0.001$ for the squared exponential kernel. For CMAB~\cite{DBLP:conf/aistats/HanZWXZ23}, the number of arms is set to $A=5$, the number of initial rounds is set to $T_{0} = 5$ and the trade-off parameter is set to $\gamma=2 \cdot \sqrt{\frac{A \cdot T}{(U + 2) \cdot log(T)}}$ in Eq.~\eqref{eq: probability_action_a} according to~\cite{DBLP:conf/aistats/HanZWXZ23}.
By default, we set $T=100$ while the overall privacy budget $\epsilon_u^{total}$ is set to 10 and $\delta_u^{total}=e^{-5}$~\cite{wu2021fedgnn} for all clients in all datasets.
Additionally, the mapping function $C_u(\cdot)$ uniformly distributes privacy budget values within the range $[\frac{\epsilon_u^{total}}{T},\frac{\epsilon_u^{total}}{T_{min}}]$, where $T_{min}$ is default as $70$ to control the maximum budget cost per round.
We also conduct sensitivity experiments for the hyper-parameters $\epsilon_u^{total}$, $T$, and $T_{min}$ to further investigate their impact on the performance.
}

We adopt the Root Mean Square Error (RMSE) and F1-score as evaluation metrics, both of which are widely used for assessing recommenders. For the F1-score, we binarize the ratings and the outputs from the recommenders to calculate rating classification precision and recall, which are then combined to compute the F1-score. The binarization threshold is set to $0.5$. The reward in round $t$ is quantified by $r^{t}=\text{RMSE}^{t-1}-\text{RMSE}^{t}$.
All experiments are tested by Python and executed on an Intel\textsuperscript{®} CPU server with a Xeon\textsuperscript{®} E5-2660 v4 CPU and 189 GB of RAM.

\subsubsection{Baselines}
To demonstrate the superiority of \emph{BGTplanner}, we compare it with the following baselines.
(1) \textbf{FedSGD}~\cite{mcmahan2017}, the fundamental FL algorithm without DP noises, serves as the benchmark to establish the upper bound accuracy of DP-based baselines.
(2) \textbf{ADPML}~\cite{electronics12030658} determines the minimum privacy budget ($\epsilon_{min}$) and the maximum privacy budget ($\epsilon_{max}$) before training. It gradually increases the privacy budget used by clients during training. We set $\epsilon_{min}= 1$, $\epsilon_{max} =10$ and the increasing rate as $0.9$ in all datasets following the defaulted settings in~\cite{electronics12030658}.
(3) \textbf{ADAP}~\cite{DBLP:conf/trustcom/FuCH22} dynamically adjusts the clip threshold during training and gradually increases the privacy budget used by clients based on the trend of the test loss in model performance. All configurations for this baseline follow the defaults specified in~\cite{DBLP:conf/trustcom/FuCH22}.
(4) \textbf{DPSGD}~\cite{DBLP:conf/ccs/AbadiCGMMT016}, designed a more advanced Gaussian mechanism (GM). It uses the Moments Accountant (essentially RDP) to accumulate and allocate privacy loss for the GM. We utilize the open source code with the same settings from~\cite{Yang2022} for our experiments.

\begin{figure*}[t] 
\centering
    \subfigure{  \includegraphics[width=0.985\linewidth]{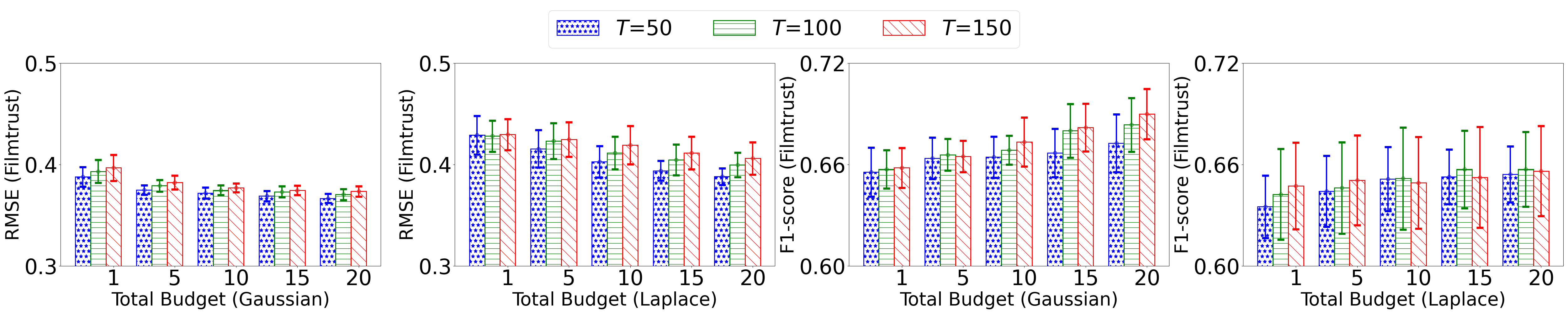}
    \label{fig:T_total_Filmtrust_all}
    }
    \subfigure{     
\includegraphics[width=0.985\linewidth]{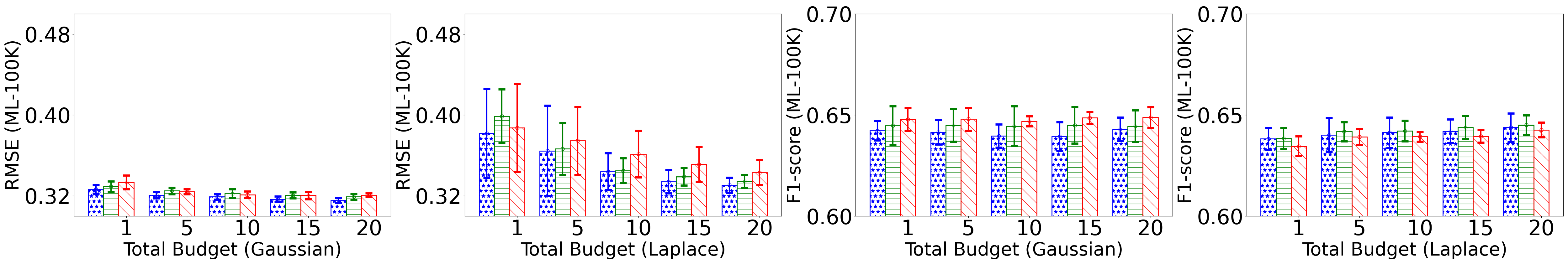}
    }
    \label{fig:T_total_ML-100K_all}
    \subfigure{     \includegraphics[width=0.985\linewidth]{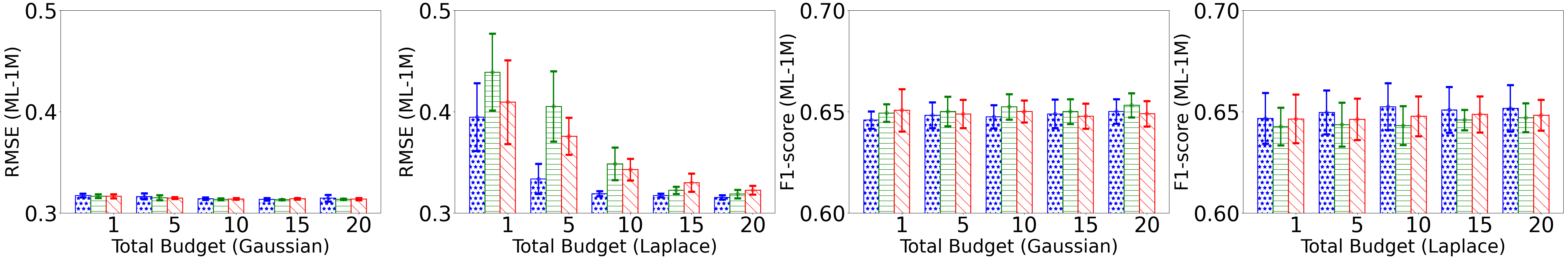}
\label{fig:T_total_ML-1M_all}
    }
\caption{The sensitivity experiments of BGTplanner with $T$, which represents the maximum number of training rounds, are conducted across different privacy budget settings and DP noise mechanisms using the Filmtrust, ML-100K, and ML-1M datasets, respectively. Lower RMSE values indicate better performance, while higher F1-scores reflect better performance.}
\label{fig:hyper_T_total}
\end{figure*}

\subsection{Recommendation Performance Results}
Table~\ref{table:performance} 
shows the recommendation performance between BGTplanner and other baselines under both the Gaussian mechanism (GM) and Laplace mechanism (LM) with $\epsilon_u^{total}=10$.
\textcolor{black}{
In Fig.~\ref{fig:all_baselines}}, 
we further compare final recommendation performance together with standard deviation constrained by various total privacy budgets. 
From Table~\ref{table:performance} and \textcolor{black}{Fig.~\ref{fig:all_baselines}
}, 
we have the following 
observations:
\begin{enumerate}
    \item BGTplanner consistently achieves the recommendation performance closest to FedSGD (which does not incorporate DP noises for gradient protection) across all experimental cases.  In Table~\ref{table:performance}, BGTplanner improves RMSE by an average of 3.40\% with the GM and 10.12\% with the LM. Additionally, in terms of the F1-score, BGTplanner outperforms the second-best baseline by an average of 0.71\% across all datasets and noise mechanisms.
    \item As shown in \textcolor{black}{
Figs.~\ref{fig:all_baselines}}, the performance gain diminishes as the total privacy budget increases, due to the noise variance approaching the increased privacy budget, thereby shrinking the improvement space achievable by tuning privacy budget consumption. Meanwhile, the results demonstrate better training performance of BGTplanner in strict budget constraints, indicating that BGTplanner can allocate the privacy budget strategically with the CMAB method and the long-term budget Constrainter.
    \item DPSGD achieves the second-best performance among DP-based algorithms in most cases when using the GM.
    This is because DPSGD, using RDP, can achieve a tighter bound to track privacy loss during training. However, DPSGD's uniform noise variance allocation across iterations fails to minimize noise impact in the training process, allowing the superior performance of BGTplanner.
    \item ADPML and ADAP rely on heuristics or prior knowledge to set privacy budgets and adjust hyper-parameters. These constraints are especially problematic when dealing with different privacy requirements and make it difficult for these methods to adapt to the dynamic nature of changing settings, which may lead to worse results.
\end{enumerate}


\subsection{Sensitive Experiments of Hyperparameters}
Additionally, we conducted a series of experiments to analyze the sensitivity of the BGTplanner algorithm to key hyperparameters, specifically \( T_{min} \) and \( T \), which control the range of action space and the maximum number of training rounds, respectively. The results of these experiments are presented in Figs.~\ref{fig:hyper_T_min}-\ref{fig:hyper_T_total},  where all other parameters are held constant while varying $\epsilon_u^{total}$ and $T_{min}$ (or $T$), respectively.

\subsubsection{Overall Observations}
In general, the experiments consistently show that as the total privacy budget increases, the model's training performance improves. Moreover, the robustness of BGTplanner is evident from the relatively small variations in performance across different settings of \( T_{min} \) and \( T \), compared to baseline methods. This stability underscores the algorithm's superiority in various training scenarios.
Additionally, as illustrated in Figs.~\ref{fig:hyper_T_min}-\ref{fig:hyper_T_total}, BGTplanner enhances overall training efficiency as the total privacy cost increases for both Gaussian and Laplace mechanisms. However, this efficiency comes at the expense of greater privacy disclosure, highlighting the need to balance privacy and performance. 
%

\subsubsection{Sensitivity to \( T_{min} \)}
Figs.~\ref{fig:hyper_T_min} explores the impact of varying \( T_{min} \), which represents the minimum number of training rounds. A smaller \( T_{min} \) implies a broader range of possible actions for the CMAB model, allowing it to allocate a larger portion of the privacy budget to each training round. Generally, we observe that reducing \( T_{min} \) tends to improve the performance of privacy budget allocation, as it provides more flexibility for distributing the budget effectively across rounds. However, in scenarios where the total privacy budget is limited, an excessively small \( T_{min} \) may lead to rapid consumption of the privacy budget, causing premature termination of training and resulting in suboptimal performance.

\subsubsection{Sensitivity to \( T \)}
Figs.~\ref{fig:hyper_T_total} investigates the influence of the maximum number of training rounds \( T \) on the algorithm's performance. The results reveal distinct trends between the Gaussian and Laplace mechanisms. Under the Gaussian mechanism, BGTplanner achieves better training efficiency with a larger \( T \) ia large proportion of situations, owing to the advanced RDP method, which allows tighter tracking of privacy loss as the number of rounds increases. Conversely, under the Laplace mechanism, increasing \( T \) tends to reduce training efficiency. This discrepancy likely arises because the basic composition theorem, used in the Laplace mechanism, is more sensitive to changes in \( T \), necessitating more careful budget allocation.

Other than the experiments presented above, we also examine the sensitivity of some crucial parameters 
in Appendix~F, 
which illustrates how the hyperparameters affect the overall allocation performance. 
Overall, the results underscore the effectiveness of BGTplanner in achieving high model accuracy under conditions, demonstrating its superiority over other baseline methods.


\section{Conclusion}\label{sec:conclusion}
In this study, we devise a novel online privacy budget allocation solution, \emph{i.e.}, BGTplanner, for differentially private federated recommenders (DPFRs) under dynamic contexts,  solving the challenging problem of privacy budget allocation in federated recommender systems. 
We have pioneered the representation of privacy budget allocation in federated recommender (FR) as a budget-constrained online problem, 
making 
a novel contribution in this field. Furthermore,  Gaussian Process Regression
(GPR) is employed to predict model learning progress. A Contextual Multi-Armed Bandit (CMAB) algorithm is devised to make final budget allocation decisions 
on the fly 
with regret guarantees. 
The empirical evidence from the extensive experiments demonstrates the superiority of our method over existing privacy budget allocation algorithms. To solidify and extend our findings, more comprehensive and detailed experimental analysis, and the customization of privacy budgets for individual clients deserve exploration in future research.



\bibliographystyle{IEEEtran}
\bibliography{ref}



\begin{IEEEbiography}
[{\includegraphics[width=1in,height=1.25in,clip,keepaspectratio]{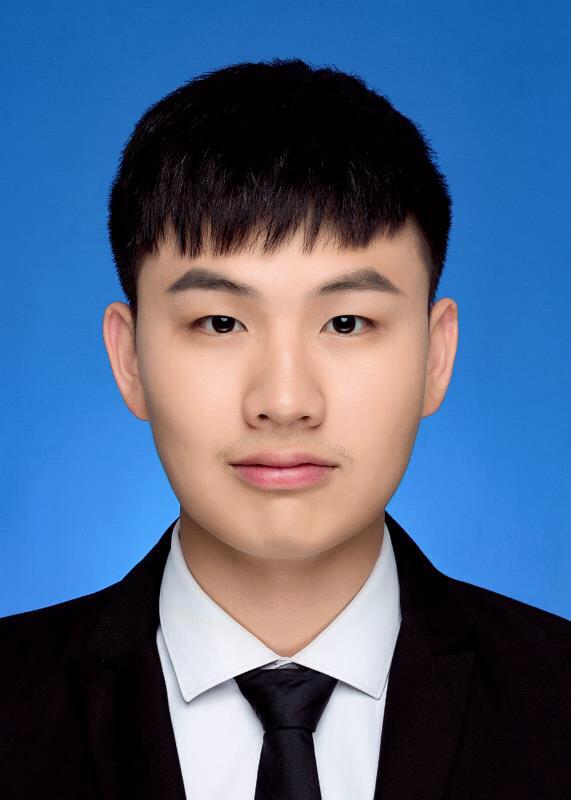}}]
{Xianzhi Zhang} received his B.S. degree from Nanchang University (NCU), Nanchang, China, in 2019 and an M.S. degree from the School of Computer Science and Engineering, Sun Yat-sen University (SYSU), Guangzhou, China, in 2022.  He is currently pursuing a Ph.D. degree in the School of Computer Science and Engineering at Sun Yat-sen University, Guangzhou, China. He is also working as a visiting PhD student at the School of Computing,  Macquarie University, Sydney, Australia. Xianzhi's current research interests include video caching, privacy protection, federated learning, and edge computing. His researches have been published at IEEE TPDS and TSC, and won the Best Paper Award at PDCAT 2021. 
\end{IEEEbiography}

\begin{IEEEbiography}[{\includegraphics[width=1in,height=1.25in,clip,keepaspectratio]{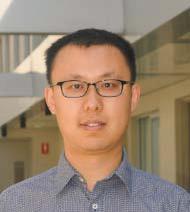}}]
{Yipeng Zhou} is a senior lecturer in computer science with School of Computing at Macquarie University, and the recipient of ARC DECRA in 2018. From Aug. 2016 to Feb. 2018, he was a research fellow with Institute for Telecommunications Research (ITR) of University of South Australia. From 2013.9-2016.9, He was a lecturer with College of Computer Science and Software Engineering, Shenzhen University. He was a Postdoctoral Fellow with Institute of Network Coding (INC) of The Chinese University of Hong Kong (CUHK) from Aug. 2012 to Aug. 2013. He won his PhD degree and Mphil degree from Informatio Engineering (IE) Department of CUHK respectively. He got Bachelor degree in Computer Science from University of Science and Technology of China (USTC). His research interests lie in federated learning, privacy protection and caching algorithm design in networks. He has published more than 80 papers including IEEE INFOCOM, ICNP, IWQoS, IEEE ToN, JSAC, TPDS, TMC, TMM, etc.
\end{IEEEbiography}

\begin{IEEEbiography}[{\includegraphics[width=1in,height=1.25in,clip,keepaspectratio]{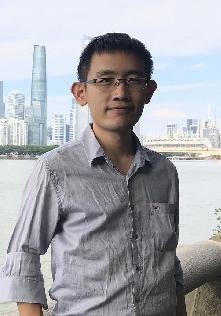}}]
{Miao Hu} is currently an Associate Professor with the School of Computer Science and Engineering, Sun Yat-Sen University, Guangzhou, China. He received the B.S. degree and the Ph.D. degree in communication engineering from Beijing Jiaotong University, Beijing, China, in 2011 and 2017, respectively. From Sept. 2014 to Sept. 2015, he was a Visiting Scholar with the Pennsylvania State University, PA, USA. His research interests include edge/cloud computing, multimedia communication and software defined networks.
\end{IEEEbiography}

\begin{IEEEbiography}[{\includegraphics[width=1in,height=1.25in,clip,keepaspectratio]{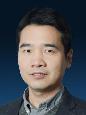}}]
{Di Wu} received the B.S. degree from the University of Science and Technology of China, Hefei, China, in 2000, the M.S. degree from the Institute of Computing Technology, Chinese Academy of Sciences, Beijing, China, in 2003, and the Ph.D. degree in computer science and engineering from the Chinese University of Hong Kong, Hong Kong, in 2007. He was a Post-Doctoral Researcher with the Department of Computer Science and Engineering, Polytechnic Institute of New York University, Brooklyn, NY, USA, from 2007 to 2009, advised by Prof. K. W. Ross. Dr. Wu is currently a Professor and the Associate Dean of the School of Computer Science and Engineering with Sun Yat-sen University, Guangzhou, China. He was the recipient of the IEEE INFOCOM 2009 Best Paper Award, IEEE Jack Neubauer Memorial Award, and etc. His research interests include edge/cloud computing, multimedia communication, Internet measurement, and network security.
\end{IEEEbiography}

\vspace{-5mm}
\begin{IEEEbiography}
[{\includegraphics[width=1in,height=1.25in,clip,keepaspectratio]{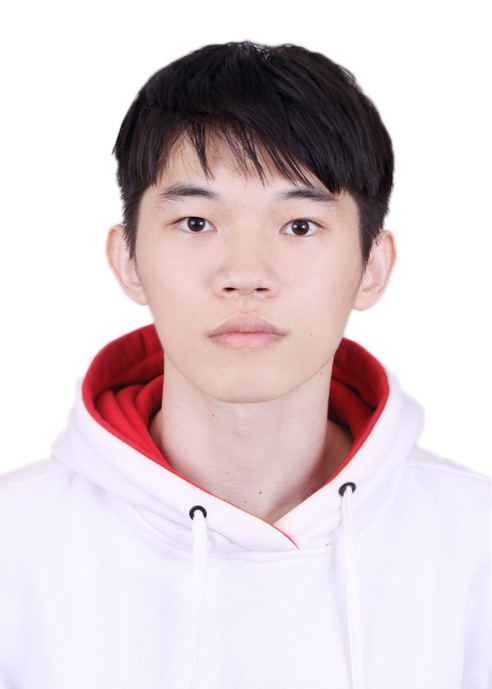}}]
{Pengshan Liao} received the BE degree and the MS degree from Sun Yat-sen University (SYSU), Guangzhou, China. His research interests include federated learning and recommendation system.
\end{IEEEbiography}

\begin{IEEEbiography}[{\includegraphics[width=1in,height=1.25in,clip,keepaspectratio]{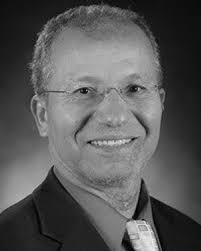}}]
{Mohsen Guizani} received the B.S. (Hons.) and first M.S. degrees in electrical engineering, and the M.S. and Ph.D. degrees in computer engineering from Syracuse University, Syracuse, NY, USA, in 1984, 1986, 1987, and 1990, respectively. He is currently a Professor with the the Machine Learning Department, Mohamed Bin Zayed University of Artificial Intelligence and the Computer Science and Engineering Department, Qatar University, Qatar. Previously, he has served in different academic and administrative positions with the University of Idaho, Western Michigan University, the University of West Florida, the University of MissouriKansas City, the University of Colorado Boulder, and Syracuse University. He is the author of nine books and more than 600 publications in refereed journals and conferences. His research interests include wireless communications and mobile computing, computer networks, mobile cloud computing, security, and smart grid. Throughout his career, he received three teaching awards and four research awards. He was a recipient of the 2017 IEEE Communications Society Wireless Technical Committee Recognition Award, the 2018 Ad Hoc Technical Committee Recognition Award for his contribution to outstanding research in wireless communications and Ad-Hoc Sensor networks, and the 2019 IEEE Communications and Information Security Technical Recognition Award for outstanding contributions to the technological advancement of security. He guests edited a number of special issues in IEEE journals and magazines. He is also the Editor-in-Chief of IEEE Network Magazine. He serves on the editorial boards for several international technical journals, and the Founder and the Editor-in-Chief for Wireless Communications and Mobile Computing (Wiley). He also served as a member, the chair, and the general chair of a number of international conferences. He was the Chair of IEEE Communications Society Wireless Technical Committee and the Chair of the TAOS Technical Committee. He has served as the IEEE Computer Society Distinguished Speaker. He is currently an IEEE ComSoc Distinguished Lecturer. He is a Senior Member of ACM.
\end{IEEEbiography}

\begin{IEEEbiography}[{\includegraphics[width=1in,height=1.25in,clip,keepaspectratio]{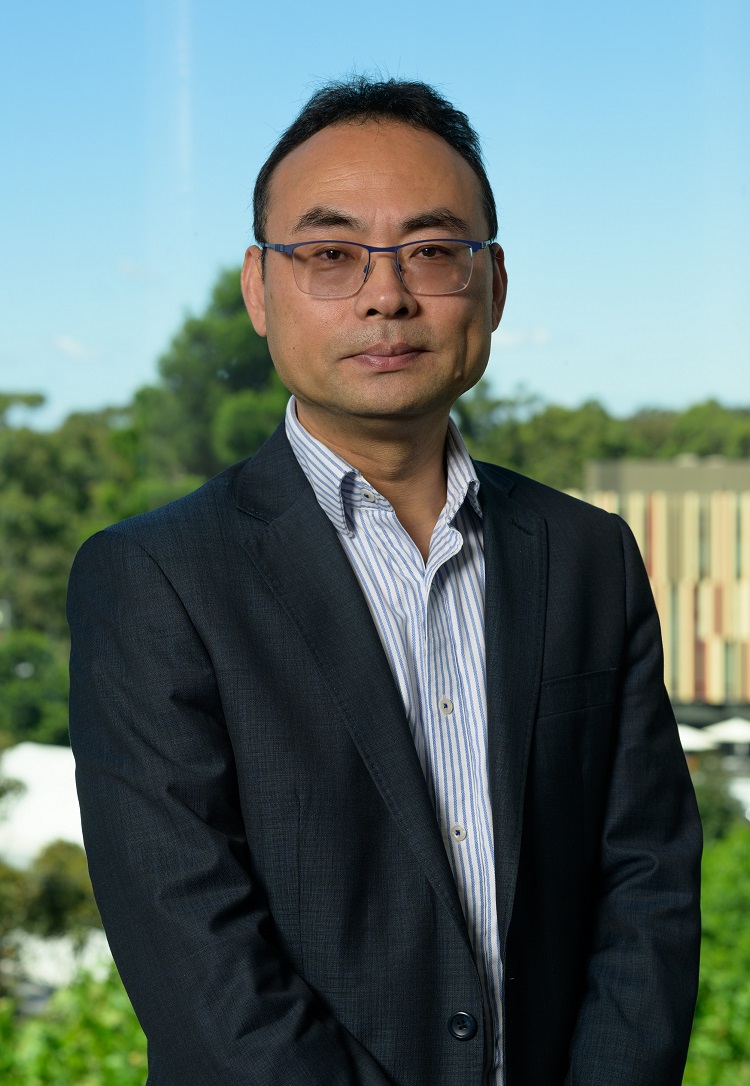}}]
{Michael Sheng} is a Distinguished Professor and Head of School of Computing at Macquarie University. Before moving to Macquarie, Michael spent 10 years at School of Computer Science, the University of Adelaide, serving in senior leadership roles such as Interim Head of School and Deputy Head of School. Michael holds a PhD degree in computer science from the University of New South Wales (UNSW) and did his post-doc as a research scientist at CSIRO ICT Centre. From 1999 to 2001, Sheng also worked at UNSW as a visiting research fellow. Prior to that, he spent 6 years as a senior software engineer in industries. 

Prof. Michael Sheng's research interests include Web of Things, Internet of Things, Big Data Analytics, Web Science, Service-oriented Computing, Pervasive Computing, and Sensor Networks. He is ranked by Microsoft Academic as one of the Most Impactful Authors in Services Computing (ranked Top 5 all time worldwide) and in Web of Things (ranked Top 20 all time). He is the recipient of the AMiner Most Influential Scholar Award on IoT (2019), ARC Future Fellowship (2014), Chris Wallace Award for Outstanding Research Contribution (2012), and Microsoft Research Fellowship (2003). Prof Michael Sheng is Vice Chair of the Executive Committee of the IEEE Technical Community on Services Computing (IEEE TCSVC), the Associate Director (Smart Technologies) of Macquarie's Smart Green Cities Research Centre, and a member of the ACS Technical Advisory Board on IoT.
\end{IEEEbiography}

\clearpage
\begin{appendices}

\end{appendices}

\vfill

\end{document}